\def\argmin{\mathop{\text{arg\,min}}}
\def\argmax{\mathop{\text{arg\,max}}}
\def\Tr{\mathop{\text{Tr}}}
\let\hat\widehat
\let\tilde\widetilde
\newcommand{\bm}[1]{\mbox{\boldmath$#1$\unboldmath}}
\newcommand{\mysgn}{\textrm{sign}}
\newcommand{\lmult}{$\ell_1/\ell_2$}
\newcommand{\bx}{{\mathbf x}}
\newcommand{\by}{{\mathbf y}}
\newcommand{\bX}{{\mathbf X}}
\newcommand{\bY}{{\mathbf Y}}
\newcommand{\bB}{{\mathbf B}}
\newcommand{\bW}{{\mathbf W}}
\newcommand{\bZ}{{\mathbf Z}}
\newcommand{\bA}{{\mathbf A}}
\newcommand{\bV}{{\mathbf V}}
\newcommand{\bb}{{\boldsymbol \beta}}
\newcommand{\ba}{{\boldsymbol \alpha}}
\newcommand{\bg}{{\boldsymbol \Theta}}
\newtheorem{lemma}{Lemma}
\newtheorem{theorem}{Theorem}
\title{Graph-Structured Multi-task Regression and an Efficient Optimization Method for General Fused Lasso}
\author{Xi Chen$^{1}$ \and Seyoung Kim$^{1}$ \and Qihang Lin$^{2}$ \and Jaime G. Carbonell$^{1}$ \and  Eric P. Xing$^{1}$
\thanks{To whom correspondence should be addressed: \textsf{epxing@cs.cmu.edu}}}
\date{}
\abstract{ We consider the problem of learning a structured
multi-task regression, where the output consists of multiple
responses that are related by a graph and the correlated response
variables are dependent on the common inputs in a sparse but
synergistic manner. Previous methods such as \lmult-regularized
multi-task regression assume that all of the output variables are
equally related to the inputs, although in many real-world problems,
outputs are related in a complex manner. In this paper, we propose
graph-guided fused lasso (GFlasso) for structured multi-task
regression that exploits the graph structure over the output
variables. We introduce a novel penalty function based on fusion
penalty to encourage highly correlated outputs to share a common set
of relevant inputs. In addition, we propose a simple yet efficient
proximal-gradient method for optimizing GFlasso that can also be
applied to any optimization problems with a convex smooth loss and
the general class of fusion penalty defined on arbitrary graph
structures. By exploiting the structure of the non-smooth ``fusion
penalty'', our method achieves a faster convergence rate than the
standard first-order method, sub-gradient method, and is
significantly more scalable than the widely adopted second-order
cone-programming and quadratic-programming formulations. In
addition, we provide an analysis of the consistency property of the
GFlasso model. Experimental results not only demonstrate the
superiority of GFlasso over the standard lasso but also show the
efficiency and scalability of our proximal-gradient method. }
\keywords{lasso, fused lasso, multi-task learning, structured sparsity, proximal-gradient method}
\begin{document}

\maketitle

\section{Introduction}

%

In multi-task learning, we are interested in learning multiple
related tasks jointly by analyzing data from all
of the tasks at the same time instead of considering each task
individually \cite{Yu:2005,Zhang:2008,Obozinski:09}. When data are
scarce, it is greatly advantageous to borrow the information in the
data from other related tasks to learn each task more effectively.

In this paper, we consider a multi-task regression problem, where
each task is to learn a functional mapping from a high-dimensional
input space to a continuous-valued output space and only a small
number of input covariates are relevant to the output. Furthermore,
we assume that the outputs are related in a complex manner, and that
this output structure is available as prior
knowledge in the form of a graph. Given this setting, it is
reasonable to believe that closely related outputs tend to share a
common set of relevant inputs. Our goal is to recover this
structured sparsity pattern in the regression coefficients shared
across tasks related through a graph.

When the tasks are assumed to be equally related to inputs without
any structure, a mixed-norm regularization such as \lmult-  and
$\ell_1/\ell_\infty$-norms has been used to find inputs relevant to
all of the outputs jointly \cite{Obozinski:09, Turlach:05}. However,
in many real-world problems, some of the tasks are often more
closely related and more likely to share common relevant covariates
than other tasks. Thus, it is necessary to take into account the
complex correlation structure in the outputs for a more effective
multi-task learning. For example, in genetic association analysis,
where the goal is to discover few genetic variants or single
neucleotide polymorphisms (SNPs) out of millions of SNPs (inputs)
that influence phenotypes (outputs) such as gene expression
measurements \cite{brem:2008}, groups of genes in the same pathways
are more likely to share common genetic variants affecting them than
other genes. In a neuroscience application, an
$\ell_1/\ell_\infty$-regularized multi-task regression has been used
to predict neural activities (outputs) in brain in response to words
(inputs) \cite{Liu:09}. Since neural activities in the brain are
locally correlated, it is necessary to take into account this local
correlation in different brain regions rather than assuming that all
regions share a similar response as in \cite{Liu:09}. A similar
problem arises in stock prediction where some of the stock prices
are more highly correlated than others \cite{Gho:09}.

The main contributions of this paper are two-fold. First, we propose
a structured regularized-regression approach called graph-guided
fused lasso (GFlasso) for  sparse multi-task learning problems and
introduce a novel penalty function based on fusion penalty that
encourages tasks related according to the graph to have a similar
sparsity pattern. Second, we propose an efficient optimization
algorithm based on proximal-gradient method that can be used to
solve GFlasso optimization as well as any optimizations with a
convex smooth loss and general fusion penalty which can be defined
on an arbitrary graph structure.

In addition to the standard lasso penalty for overall sparsity
\cite{Tibshirani:96}, GFlasso employs a ``fusion penalty"
\cite{Tibshirani:05} that fuses regression coefficients across
correlated outputs, using the weighted connectivity of the output graph
as a guide. The overall effect of the GFlasso penalty is that it
allows us to identify a parsimonious set
of input factors relevant to dense subgraphs of outputs as
illustrated in Figure \ref{QTL}. To the best of our knowledge, this
work is the first to consider the graph structure over the outputs
in multi-task learning.
We also provide an analysis of the consistency property of the
GFlasso model.



\begin{figure}[t!]
\centering
\begin{tabular}{@{}c@{}c@{}c}
\parbox[l]{1.8cm}{\footnotesize Inputs (SNPs)}
&\multirow{3}{*}{
\parbox[c]{4.0cm}{\includegraphics[scale = 0.25]{./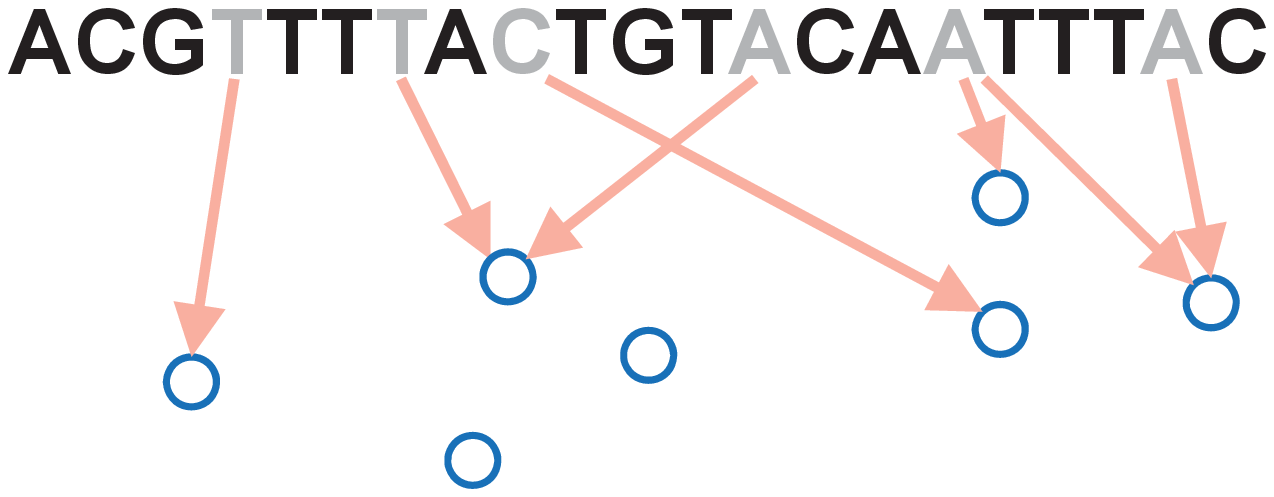}}} &
\multirow{3}{*}{
\parbox[c]{4.0cm}{\includegraphics[scale = 0.25]{./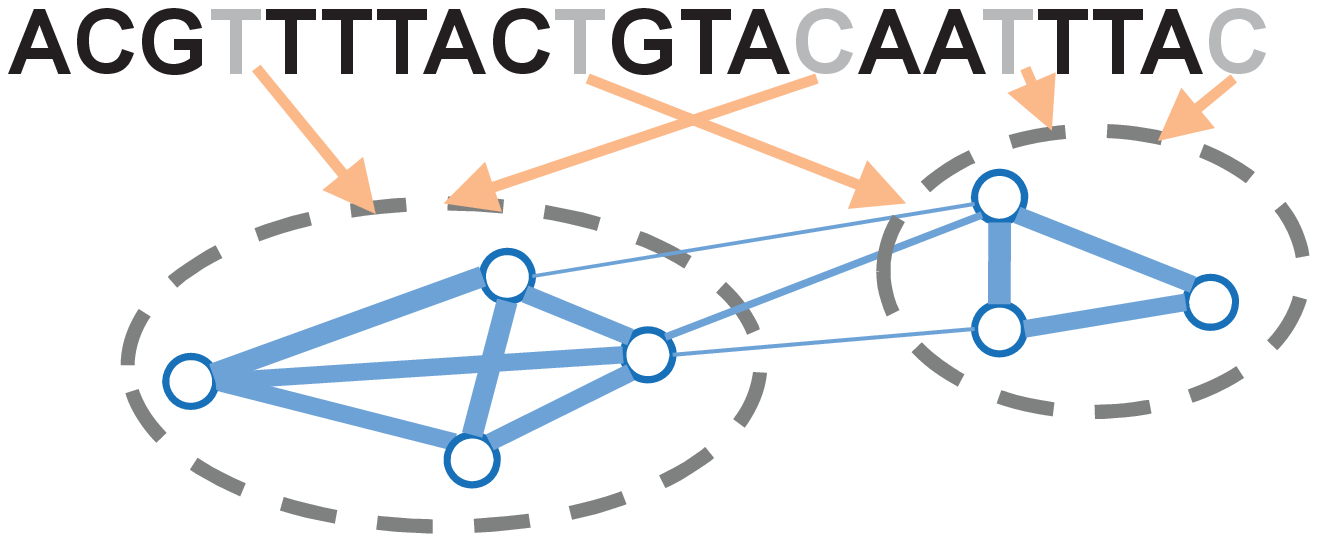}}} \\
\vspace{0pt}
& & \\
\parbox[l]{1.8cm}{\footnotesize Outputs \\(phenotypes)}
&  \\
\vspace{-30pt}
& & \\
& & \\
&  (a)   & (b)
\end{tabular}
\label{QTL}  \caption{Illustrations of multi-task regression
with (a) lasso, (b) graph-guided fused lasso.}
\end{figure}

The fusion penalty that we adopt to construct the GFlasso penalty
has been widely used for sparse learning problems, including fused
lasso \cite{Tibshirani:05}, fused-lasso signal approximator
\cite{Friedman:07}, and network learning \cite{Kolar:09}. However,
because of the non-separability of the fusion penalty function,
developing a fast optimization algorithm has remained a challenge.
The available optimization methods include formulating the problem
as second-order cone programming (SOCP) or quadratic programming
(QP) and solving them by interior-point methods (IPM)
\cite{Tibshirani:05}, but these approaches are computationally
expensive even for problems of moderate size. In order to reduce the
computational cost, when the fusion penalty is defined on a chain or
two-way grid structure over inputs, the pathwise coordinate
optimization has been applied \cite{Friedman:07}. However, when the
fusion penalty is defined on a general graph, this method cannot be
easily applied because of the relatively complex sub-gradient
representation. In addition, as pointed out in \cite{Tibshirani:05},
this method ``is not guaranteed to yield exact solution'' when the
design matrix is not orthogonal. Very recently, an unpublished
manuscript \cite{Holger:09} proposed a different algorithm which
reformulates the problem as a maximum flow problem. However, this
algorithm works only when the dimension is less than the sample size
and hence is not applicable to high-dimensional sparse learning
problems. In addition, it lacks any theoretical guarantee of the
convergence.


In order to make the optimization efficient and scalable, a
first-order method (using only gradient) is desired. In this paper,
we propose a proximal-gradient method to solve GFlasso. More
precisely, by exploiting the structure of the non-smooth fusion
penalty, we introduce its smooth approximation and  optimize this
approximation based on the accelerated gradient method in
\cite{Nesterov:05}. It can be shown that our method achieves the
convergence rate of $O(\frac{1}{\epsilon})$, where the $\epsilon$ is
the desired accuracy.  Our method is significantly faster than the
most natural first-order method, subgradient method \cite{Ber:99}
with $O(\frac{1}{\epsilon^2})$ convergence rate and more scalable by
orders of magnitude than IPM for SOCP and QP. We emphasize that
although we present this algorithm for GFlasso, it can also be
applied to solve any regression problems involving a fusion penalty
such as fused lasso with a univariate response, where the fusion
penalty can be defined on any structures such as chain, grid, or
graphs. In addition, it is easy to implement with only a few lines
of MATLAB code.

The rest of the paper is organized as follows. In Section 2, we
briefly review lasso and $\ell_1/\ell_2$-regularized regression for
sparse multi-task learning. In Section 3, we present GFlasso. In
Section 4, we present our proximal-gradient optimization method for GFlasso
and other related optimization problems, and discuss
its convergence rate and complexity analysis. In Section 5, we
present the preliminary consistency result of GFlasso. In Section 6,
we demonstrate the performance of the proposed method on simulated
and asthma datasets, followed by conclusions in Section 7.

\section{Preliminary: $\ell_1$- and $\ell_1/\ell_2$-Reguliarized Multi-task Regression   }

Assume a sample of $N$ instances, each represented by a
$J$-dimensional input vector and a $K$-dimensional output vector.
Let $\mathbf{X}=(\mathbf{x}_1, \ldots, \mathbf{x}_J) \in
\mathbb{R}^{N \times J}$ denote the input matrix, and let
$\mathbf{Y}=(\mathbf{y}_1, \ldots, \mathbf{y}_K) \in \mathbb{R}^{N
\times K}$ represent the output matrix. For each of the $K$ output
variables (so called tasks), we assume a linear model:
\begin{equation}
    \mathbf{y}_k = \mathbf{X}\bm{\beta}_k+\bm{\epsilon}_k,
    \quad \forall k=1,\ldots, K,
    \label{eq:m}
\end{equation}
where $\bm{\beta}_k =(\beta_{1k}, \ldots, \beta_{Jk})^T \in
\mathbb{R}^J$ is the vector of regression coefficients for the
$k$-th output variable, and $\bm{\epsilon}_k$ is a vector of $N$
independent zero-mean Gaussian noise. We center the $\mathbf{y}_k$'s
and $\mathbf{x}_j$'s such that $\sum_{i=1}^N y_{ik}=0$ and
$\sum_{i=1}^N x_{ij}=0$, and consider the model without an
intercept.


Let $\mathbf{B}=(\bm{\beta}_1, \ldots, \bm{\beta}_K)$ denote the
$J\times K$ matrix of regression coefficients of all $K$ response
variables. Then, lasso \cite{Tibshirani:96}  obtains
$\hat{\mathbf{B}}^{\textrm{lasso}}$ by solving the following
optimization problem:
\begin{equation}
    \widehat{\mathbf{B}}^{\textrm{lasso}} = \argmin_{\bB} \frac{1}{2}
    \|\bY-\bX\bB\|_F^2+\lambda \|\bB\|_1,
\label{eq:lasso}
\end{equation}
where $\|\cdot\|_F$ denotes the matrix Frobenius norm, $\|\cdot\|_1$
denotes the the entry-wise matrix $\ell_1$-norm (i.e.,
$\|\bB\|_1=\sum_{k=1}^K\sum_{j=1}^J|\bb_{jk}|$) and $\lambda$ is a
regularization parameter that controls the sparsity level. We note
that lasso in \eqref{eq:lasso} does not offer any mechanism for a
joint estimation of the parameters for the multiple outputs.

Recently, a mixed-norm (e.g., $\ell_1/\ell_2$) regularization has
been used for a recovery of joint sparsity across multiple tasks,
when the tasks share the common set of relevant covariates
\cite{Argyriou:06,Obozinski:09}. More precisely, it encourages the
relevant covariates to be shared across output variables and finds
estimates in which only few covariates have non-zero regression
coefficients for one or more of the $K$ output variables. The
corresponding optimization problem is given as follows:
\begin{equation}
\widehat{\mathbf{B}}^{l_1/l_2} = \argmin_{\bB} \frac{1}{2}
    \|\bY-\bX\bB\|_F^2+\lambda \|\bB\|_{1,2},
\label{eq:l1l2}
\end{equation}
where $\|\bB\|_{1,2}=\sum_{j=1}^J \|\bm{\beta}^j\|_2$,
$\bm{\beta}^j$ is the $j$-th row of the regression coefficient
matrix $\bB$, and $\|\cdot\|_2$ denotes the vector $\ell_2$-norm.
Although the mixed-norm allows information to be combined across
output variables, it assumes all of the tasks are equally related to
inputs, and cannot incorporate a complex structure in how the
outputs themselves are correlated.

\section{Graph-guided Fused Lasso for Sparse Structured Multitask Regression}

In this section, we propose GFlasso that explicitly takes
into account the complex  dependency structure in the output variables
represented as a graph while estimating the regression
coefficients. We assume that the output structure of the $K$
output variables is available as a graph $G$ with a set of nodes
$V=\{1,\ldots,K\}$ and edges $E$. In this paper, we adopt a simple
strategy for constructing such graphs by computing pairwise
correlations based on $\mathbf{y}_k$'s, and connecting two nodes
with an edge if their correlation is above a given threshold $\rho$.
More sophisticated methods can be easily employed, but they are not
the focus of this paper. Let $r_{ml} \in \mathbb{R}$ denote the
weight (can be either positive or negative) of an edge $e=(m,l) \in
E$ that represents the strength of correlation between the two
nodes. Here, we simply adopt the Pearson's correlation between
$\by_m$ and $\by_l$ as $r_{ml}$.

Given the graph $G$, it is reasonable to assume that if two output
variables are connected with an edge in the graph, they tend to be
influenced by the same set of covariates with similar strength. In
addition, we assume that the edge weights in the graph $G$ contain
information on how strongly the two output variables are related
and thus share relevant covariates. GFlasso employs an additional
constraint over the standard lasso by fusing the $\beta_{jm}$ and
$\beta_{jl}$ if $(m,l) \in E$ as follows:
\begin{equation}
  \label{eq:graph_obj}
   \hat{\mathbf{B}}^{\textrm{GF}}=\min_{\bB} f(\bB) \equiv  \frac{1}{2}\|\bY-\bX\bB\|_F^2+\lambda\|\bB\|_1+\gamma\sum_{e=(m,l)\in
   E}\tau(r_{ml})\sum_{j=1}^J|\bb_{jm}-\mbox{sign}(r_{ml})\bb_{jl}|,
\end{equation}
where $\lambda$ and $\gamma$ are regularization parameters that
control the complexity of the model. A larger value for $\gamma$ leads
to a greater fusion effect. In this paper, we consider $\tau(r)=|r|$,
but any positive monotonically increasing function of the absolute
value of correlations can be used. The $\tau(r_{ml})$ weights the
fusion penalty for each edge such that $\beta_{jm}$ and $\beta_{jl}$
for highly correlated outputs with large $|r_{ml}|$ receive a
greater fusion effect than other pairs of outputs with weaker
correlations. The $\mysgn(r_{ml})$ indicates that two negatively
correlated outputs are encouraged to have the same set of relevant
covariates with the same absolute value of regression coefficients
of the opposite sign.

When the edge-level fusion penalty is applied to all of the edges in
the entire graph $G$ in the GFlasso penalty, the overall
effect is that each subset of output variables within a densely
connected subgraph tends to have common relevant covariates. This is
because the fusion effect propagates through the neighboring edges,
fusing the regression coefficients for each pair of outputs
connected by an edge, where the amount of such propagation is
determined by the level of local edge connectivities and edge
weights.

The idea of using a fusion penalty has been first proposed for the problem
with a univariate response and high-dimensional covariates to fuse the
regression coefficients of two adjacent covariates when the
covariates are assumed to be ordered such as in time
\cite{Tibshirani:05}. In GFlasso, we employ a similar but more
general strategy in a multiple-output regression in order to
identify shared relevant covariates for related output variables.

\section{Proximal-Gradient Method for Optimization}
\label{sec:prox_grad}

Although the optimization problem for GFlasso in
\eqref{eq:graph_obj} is convex,
it is not trivial to optimize it because of the non-smooth penalty function.
The problem can be
formulated as either SOCP or QP using the similar strategy in
\cite{Tibshirani:05}. The state-of-the-art approach for solving SOCP
and QP are based on an IPM that requires
solving a Newton system to find a search direction.
Thus, it is computationally very expensive even for problems of
moderate size.

In this section, we propose a proximal-gradient method which
utilizes only the first-order information with a fast convergence
rate and low computation complexity per iteration. More precisely,
we first reformulate the $\ell_1$ and fusion penalty altogether into
a max problem over  the auxiliary variables, and then introduce its
smooth lower bound. Instead of optimizing the original penalty, we
adopt the accelerated gradient descent method \cite{Nesterov:05} to
optimize the smooth lower bound.

The approach of ``proximal'' method is quite general in that it
optimizes a lower or upper bound of the original objective function,
rather than optimize the objective function directly. This lower or
upper bound has a simpler form that allows for an easy optimization.
Recently, different variants of a proximal-gradient method have been
applied to solve optimization problems with a convex loss and
non-smooth penalty, including matrix completion \cite{Shuiwang:09}
and sparse signal reconstruction \cite{Jieping:09}. However, the
non-smooth penalties in these works are the norm of the coefficient
itself instead of the norm of a linear transformation of the
coefficient. So they use a quadratic approximation to the smooth
loss function while keeping the easily-handled non-smooth penalty in
their original form. In contrast, in this work, we find a smooth
lower bound of the complicated non-smooth penalty term.


\subsection{A Reformulation of the Non-smooth Penalty Term}
\label{sec:reform}

First, we rewrite the graph-guided fusion penalty function in
\eqref{eq:graph_obj}, using a vertex-edge incident matrix $H \in
\mathbb{R}^{K \times |E|}$, as follows:
\begin{equation*}
\sum_{e=(m,l)\in
E}\tau(r_{ml})\sum_{j=1}^J|\bb_{jm}-\mbox{sign}(r_{ml})\bb_{jl}|
\equiv \|\bB    H\|_1,
\end{equation*}
where $H \in \mathbb{R}^{K \times |E|}$ is a variant vertex-edge
incident matrix defined as below:
\begin{equation*}
H_{k,e}=\left\{
\begin{array}{ll}
\tau(r_{ml})&\mbox{if }e=(m,l)\mbox{ and }k=m\\
-\mbox{sign}(r_{ml})\tau(r_{ml})&\mbox{if }e=(m,l)\mbox{ and }k=l\\
0&\mbox{otherwise.}
\end{array}
\right.
\end{equation*}
Therefore, the overall penalty in \eqref{eq:graph_obj} including
both lasso and graph-guided fusion penalty functions can be written as $\|\bB
C\|_1$, where $C=(\lambda I, \gamma H)$ and $I \in \mathbb{R}^{K
\times K}$ denotes an identity matrix.

Since the dual norm of the entry-wise matrix $\ell_\infty$ norm is the
$\ell_1$ norm, we can further rewrite the overall penalty as:
\begin{equation}
\label{eq:dual_norm}
  \|\bB C\|_1 \equiv \max_{\|\bA\|_{\infty}\leq1} \langle \bA,\bB C \rangle,
\end{equation}
where $\langle \mathbf{U}, \mathbf{V} \rangle \equiv
\Tr(\mathbf{U}^T\mathbf{V})$ denotes a matrix inner product, $\bA
\in \mathcal{Q}=\{\bA | \|\bA\|_{\infty} \leq 1, \bA \in
\mathbb{R}^{J \times (K+|E|)}\}$ is an auxiliary matrix associated
with $\|\bB C\|_1$, and $\|\cdot\|_\infty$ is the matrix entry-wise
$\ell_\infty$ norm, defined as the maximum absolute value of all
entries in the matrix.

According to \eqref{eq:dual_norm}, the penalty term can be viewed as
the inner product of the auxiliary matrix $\bA$ and the linear
mapping of $\bB$, $\Gamma(\bB) \equiv \bB C$, where the linear
operator $\Gamma$ is a mapping from $\mathbb{R}^{J\times K}$ into
$\mathbb{R}^{J\times (K+|E|)}$. By the fact that $\langle \bA,
\Gamma(\bB) \rangle \equiv \Tr(\bA^T\bB C) = \Tr(C\bA^T\bB) \equiv
\langle \bA C^T, \bB \rangle $, the adjoint operator of $\Gamma$ is
$\Gamma^{\ast}(\bA)=\bA C^T$ that maps $\mathbb{R}^{J\times
(K+|E|)}$ back into $\mathbb{R}^{J\times K}$. Essentially, the
adjoint operator $\Gamma^{\ast}$ is the linear operator induced by
$\Gamma$ defined in the space of auxiliary variables.  The use of
the linear mapping $\Gamma$ and its adjoint $\Gamma^{\ast}$ will
simplify our notation and provide a more consistent way to present
our key theorem as shown in the next section.

\subsection{Proximal-Gradient Method}

The formulation of the penalty in \eqref{eq:dual_norm} is still a
non-smooth function in $\bB$, and this makes the optimization still
challenging. To tackle this problem, we introduce an auxiliary
strongly convex function to construct a smooth approximation of
\eqref{eq:dual_norm}. More precisely, we define:
\begin{equation}
\label{eq:fmu}
  f_\mu(\bB)=\max_{\|\bA\|_\infty \leq 1} \langle \bA,\bB C \rangle - \mu d(\bA),
\end{equation}
where $\mu$ is a positive smoothness parameter and $d(\bA)$ is an
arbitrary smooth strongly-convex function defined on $\mathcal{Q}$.
The original penalty term can be viewed as $f_\mu(\bB)$ with
$\mu=0$, i.e. $f_0(\bB)=\max_{\|\bA\|_\infty \leq 1} \langle \bA,\bB
C \rangle$. Since our algorithm will utilize the optimal solution
$\bA^{\ast}$ to \eqref{eq:fmu},  we choose $d(\bA) \equiv
\frac{1}{2} \|\bA\|_F^2$ so that we can obtain the closed-form
solution for $\bA^{\ast}$.

It can be easily seen that $f_\mu(\bB)$ is a lower bound of $f_0(\bB)$. To
bound the gap between them, let
\begin{equation}
\label{eq:D} D= \max_{\|\bA\|_ \infty \leq 1 } d(\bA)= \frac{1}{2}
\|\bA\|_F^2 =\frac{1}{2}J(K+|E|). \end{equation}
Then, we have $f_0
(\bB)-f_\mu(\bB) \leq \mu D = \mu J(K+|E|)/2$. From the key theorem
we present below, we know that $f_\mu(\bB)$ is a smooth function for
any $\mu>0$. Therefore, $f_\mu(\bB)$ can be viewed as a smooth
approximation of $f_0(\bB)$ with the maximum gap of $\mu D$ and
the $\mu$ controls the gap between $f_\mu(\bB)$ and $f_0(\bB)$. As we
discuss in the next section in more detail, if our desired accuracy
is $\epsilon$, i.e., $f(\bB^t)-f(\bB^{\ast}) \leq \epsilon$, where
$\bB^t$ is the approximate solution at the $t$-th iteration, and
$\bB^{\ast}$ is the optimal solution to the objective function in
\eqref{eq:graph_obj}, we should set $\mu=\frac{\epsilon}{2D}$ to
achieve the best convergence rate.

Now, we present the key theorem to show that $f_\mu(\bB)$ is smooth
and that $\nabla f_\mu(\bB)$ is Lipschitz continuous. This theorem is also stated
in \cite{Nesterov:05} but without a detailed proof of the smoothness
property and a derivation of the gradient. We provide a simple
proof based on the Fenchel Conjugate and properties of
subdifferential. The details of the proof are presented in
Appendix.
\begin{theorem}
For any $\mu>0$,  $f_\mu(\bB)$ is a convex and continuously
differentiable function in $\bB$ with the gradient:
\begin{equation}
  \label{eq:grad}
  \nabla f_\mu(\bB)= \Gamma^{\ast}(\bA^{\ast}) =\bA^{\ast}  C^T,
\end{equation}
where $\Gamma^{\ast}$ is the adjoint operator of $\Gamma$ defined at
the end of Section \ref{sec:reform}; $\bA^{\ast}$ is the optimal
solution to \eqref{eq:fmu}. Furthermore, the gradient $\nabla
f_\mu(\bB)$ is Lipschitz continuous with the Lipschitz constant
$L_\mu=\frac{1}{\mu} \|\Gamma\|^2$, where $\|\Gamma\|$ is the norm
of the linear operator $\Gamma$ defined as $\|\Gamma\| \equiv
\max_{\|\bV\|_2 \leq 1} \|\Gamma(\bV)\|_2$.
 \label{thm:key}
\end{theorem}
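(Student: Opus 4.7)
The plan is to recognize $f_\mu$ as a Fenchel conjugate composed with the linear map $\Gamma$, and then invoke the standard duality between strong convexity and Lipschitz smoothness. Specifically, introduce $g(\bA) \equiv \mu d(\bA) + \iota_{\mathcal{Q}}(\bA)$, where $\iota_{\mathcal{Q}}$ is the indicator of the box $\mathcal{Q} = \{\bA : \|\bA\|_\infty \leq 1\}$. Then the definition \eqref{eq:fmu} reads
\begin{equation*}
  f_\mu(\bB) \;=\; \max_{\bA}\bigl\langle \bA, \Gamma(\bB)\bigr\rangle - g(\bA) \;=\; g^{\ast}\bigl(\Gamma(\bB)\bigr),
\end{equation*}
so $f_\mu = g^{\ast} \circ \Gamma$. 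Since $d(\bA) = \tfrac12\|\bA\|_F^2$ is $1$-strongly convex and the indicator $\iota_{\mathcal{Q}}$ is convex, $g$ is $\mu$-strongly convex on $\mathcal{Q}$; moreover $\mathcal{Q}$ is nonempty, compact, and convex, guaranteeing that the maximizer $\bA^{\ast}(\bB)$ in \eqref{eq:fmu} exists and is unique.

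The first step is to establish convexity and differentiability of $f_\mu$. Convexity is immediate because $f_\mu$ is a pointwise supremum of affine functions of $\bB$. For differentiability, I will use the Fenchel-conjugate characterization together with the subdifferential sum rule: the unique maximizer $\bA^\ast$ satisfies the first-order optimality condition $\Gamma(\bB) - \mu \nabla d(\bA^\ast) \in \partial \iota_{\mathcal{Q}}(\bA^\ast)$, so by Fenchel--Young, $\nabla g^{\ast}(\Gamma(\bB)) = \bA^\ast$ is single-valued. Uniqueness of $\bA^{\ast}$ implies that the subdifferential $\partial f_\mu(\bB)$ reduces to the singleton $\{\Gamma^{\ast}(\bA^{\ast})\}$, yielding both continuous differentiability and the gradient formula
\begin{equation*}
  \nabla f_\mu(\bB) \;=\; \Gamma^{\ast}(\bA^{\ast}) \;=\; \bA^{\ast} C^T.
\end{equation*}

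The second step is the Lipschitz bound. The standard duality fact---that the conjugate of a $\mu$-strongly convex function has a $\tfrac{1}{\mu}$-Lipschitz gradient---gives
\begin{equation*}
  \bigl\|\nabla g^{\ast}(\bZ_1) - \nabla g^{\ast}(\bZ_2)\bigr\|_F \;\leq\; \tfrac{1}{\mu}\|\bZ_1 - \bZ_2\|_F.
\end{equation*}
Applying this to $\bZ_i = \Gamma(\bB_i)$, using $\nabla f_\mu = \Gamma^{\ast} \circ \nabla g^{\ast} \circ \Gamma$, and the fact that $\|\Gamma^{\ast}\| = \|\Gamma\|$, I obtain
\begin{equation*}
  \bigl\|\nabla f_\mu(\bB_1) - \nabla f_\mu(\bB_2)\bigr\|_F \;\leq\; \tfrac{\|\Gamma\|^2}{\mu}\|\bB_1 - \bB_2\|_F,
\end{equation*}
which is exactly $L_\mu = \|\Gamma\|^2/\mu$.

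The main obstacle is the rigorous passage from the uniqueness of $\bA^{\ast}$ to single-valuedness of $\partial f_\mu(\bB)$, i.e.\ carefully checking the conditions of the Danskin/envelope-type argument for a constrained inner problem. I plan to handle this via the Fenchel-conjugate formulation outlined above, where the strong convexity of $g$ replaces any compactness-plus-continuity hypothesis and makes the gradient identity $\nabla g^{\ast}(\bZ) = \arg\max_{\bA}\{\langle \bA,\bZ\rangle - g(\bA)\}$ automatic; the chain rule through the linear map $\Gamma$ is then straightforward. The strong-convexity $\Leftrightarrow$ Lipschitz-smoothness duality is a textbook fact whose proof is a one-line convex-analysis argument, so I will state it and apply it rather than reprove it.
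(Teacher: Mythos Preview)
Your proposal is correct and follows essentially the same approach as the paper: both recognize $f_\mu$ as a Fenchel conjugate (of the strongly convex prox function, restricted to $\mathcal{Q}$) composed with the linear map $\Gamma$, use strong convexity to force uniqueness of $\bA^\ast$ and hence single-valuedness of the subdifferential, and then apply the chain rule through $\Gamma$. The only cosmetic differences are that you absorb the constraint via the indicator $\iota_{\mathcal{Q}}$ and invoke the standard strong-convexity/Lipschitz-smoothness duality explicitly, whereas the paper writes $f_\mu(\bB)=\mu\, d^\ast(\Gamma(\bB)/\mu)$ and defers the Lipschitz-constant computation to \cite{Nesterov:05}.
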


To compute the $\nabla f_\mu(\bB)$ and $L_\mu$ in the above theorem, we need to know
$\bA^{\ast}$ and $\|\Gamma\|$. We present the closed-form expressions of
$\bA^{\ast}$ and $\|\Gamma\|$ in the following two lemmas. The proof
of Lemma \ref{lem:norm} is provided in Appendix.
\begin{lemma}
Let $\bA^{\ast}$ be the optimal solution of \eqref{eq:fmu}:
\begin{equation*}
 \bA^{\ast}= S(\frac{\bB C}{\mu}),
\end{equation*}
where $S$ is the shrinkage operator defined as follows.  For $x \in \mathbb{R}$,
$S(x)=x$ if $-1< x < 1$,
$S(x)=1$ if $x \geq 1$, and $S(x)=-1$ if $x \leq -1$.
For matrix $\bA$, $S(\bA)$ is defined as applying $S$
on each and every entry of $\bA$.
\end{lemma}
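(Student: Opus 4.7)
The plan is to exploit the fact that both the objective and the constraint in the definition of $f_\mu(\bB)$ decouple across entries of the auxiliary matrix $\bA$, reducing the problem to a collection of independent one-dimensional box-constrained quadratic programs whose solutions are given by the scalar shrinkage operator.

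First, I would expand the objective in \eqref{eq:fmu} componentwise. Writing $M = \bB C \in \mathbb{R}^{J \times (K+|E|)}$, we have
\begin{equation*}
\langle \bA, \bB C\rangle - \mu d(\bA) \;=\; \sum_{i,j}\Bigl( A_{ij} M_{ij} - \tfrac{\mu}{2} A_{ij}^{2}\Bigr),
\end{equation*}
and the feasible set $\{\bA : \|\bA\|_\infty \le 1\}$ is exactly the Cartesian product of the intervals $\{A_{ij} : |A_{ij}| \le 1\}$. Hence the maximization over $\bA$ splits into the independent scalar problems
\begin{equation*}
A_{ij}^{\ast} \;=\; \argmax_{|a|\le 1}\; a M_{ij} - \tfrac{\mu}{2} a^{2}.
\end{equation*}

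Second, I would solve each scalar problem. The univariate objective $\phi(a) = a M_{ij} - \frac{\mu}{2} a^2$ is strictly concave (since $\mu>0$), so its unconstrained maximizer is $a^\circ = M_{ij}/\mu$, obtained from $\phi'(a)=0$. If $|a^\circ|\le 1$, then $a^\circ$ lies in the feasible interval and is optimal. Otherwise, by concavity, the constrained maximum is attained at the boundary point closest to $a^\circ$, i.e.\ at $\mathrm{sign}(a^\circ)\in\{-1,+1\}$. These three cases are precisely the definition of the scalar shrinkage operator $S$, so $A_{ij}^{\ast} = S(M_{ij}/\mu)$.

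Finally, assembling the entrywise solutions gives $\bA^{\ast} = S(\bB C/\mu)$ with $S$ applied componentwise, as claimed. I do not anticipate any real obstacle here: the only subtlety worth being explicit about is that separability applies simultaneously to the quadratic regularizer $d(\bA) = \tfrac12\|\bA\|_F^2$ and to the $\ell_\infty$ box constraint, which is what enables the reduction to scalar problems. Strict concavity of each scalar objective also guarantees the maximizer is unique, so $\bA^{\ast}$ is well-defined.
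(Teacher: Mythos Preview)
Your proof is correct and follows essentially the same approach as the paper: compute the unconstrained maximizer $\bB C/\mu$ and then project onto the box $\mathcal{Q}$. Your version is slightly more explicit in that you justify the projection step via entrywise separability and strict concavity, whereas the paper simply states ``take the derivative, set to zero, and project'', but the underlying argument is identical.
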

\begin{proof}
By taking the derivative of \eqref{eq:fmu} over $\bA$ and setting it to
zeros, we obtain $ \bA=\frac{\bB C}{\mu}$. Then, we project this
solution onto the $\mathcal{Q}$ and get the optimal solution $\bA^{\ast}$.
\end{proof}

\begin{lemma}
$\|\Gamma\|$ is upper bounded by $\|\Gamma\|_U \equiv \sqrt{
\lambda^2+2\gamma^2\max_{k\in V}d_k}$, where
\begin{equation}
\label{eq:d} d_k=\sum_{e\in E \ s.t. \  e \ \mathrm{incident\  on} \
k}(\tau(r_e))^2
\end{equation}
 for $k\in V$ in graph $G$; and this bound is tight.
\label{lem:norm}
\end{lemma}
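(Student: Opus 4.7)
The plan is to recognize $\|\Gamma\|$ as the spectral norm of the matrix $C$ and then bound the top eigenvalue of $CC^T$ via Gershgorin's circle theorem. First, I would argue that for $\Gamma(\bV) = \bV C$ we have
\[
\|\bV C\|_F^2 = \sum_{i} \|\bv_i^T C\|_2^2 \leq \|C\|_{\mathrm{op}}^2 \|\bV\|_F^2,
\]
where $\bv_i^T$ denotes the $i$-th row of $\bV$, and that equality is attained by concentrating $\bV$ along a single top right singular direction of $C$. Hence $\|\Gamma\|^2 = \|C\|_{\mathrm{op}}^2 = \lambda_{\max}(C C^T)$.

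Next, using $C = (\lambda I, \gamma H)$, I would compute
\[
CC^T = \lambda^2 I + \gamma^2 H H^T,
\]
so that $\lambda_{\max}(CC^T) = \lambda^2 + \gamma^2 \lambda_{\max}(HH^T)$, reducing the problem to bounding $\lambda_{\max}(HH^T)$. Examining $H$ entrywise, I find $(HH^T)_{kk} = \sum_{e \text{ incident on } k} (\tau(r_e))^2 = d_k$, while for $k \neq m$, $(HH^T)_{km} = -\mathrm{sign}(r_{km})(\tau(r_{km}))^2$ if $(k,m)\in E$ and $0$ otherwise.

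Now I would apply Gershgorin: every eigenvalue of $HH^T$ lies in some disc centered at $(HH^T)_{kk}$ of radius $\sum_{m \neq k}|(HH^T)_{km}|$. The key combinatorial observation is that
\[
\sum_{m \neq k}|(HH^T)_{km}| = \sum_{m:(k,m)\in E} (\tau(r_{km}))^2 = d_k,
\]
so each Gershgorin disc is contained in $[0, 2d_k]$, giving $\lambda_{\max}(HH^T) \leq 2\max_{k\in V} d_k$. Combining with the previous step yields $\|\Gamma\|^2 \leq \lambda^2 + 2\gamma^2 \max_k d_k$, which is the desired bound.

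To show tightness, I would exhibit a graph achieving equality: let $G$ be a single edge $(1,2)$ with $\tau(r_{12}) = 1$. Then $H\in\mathbb{R}^{2\times 1}$ and
\[
HH^T = \begin{pmatrix} 1 & -\mathrm{sign}(r_{12}) \\ -\mathrm{sign}(r_{12}) & 1 \end{pmatrix},
\]
whose eigenvalues are $0$ and $2 = 2\max_k d_k$, so the Gershgorin bound is attained and $\|\Gamma\|_U$ equals $\|\Gamma\|$. The main obstacle I anticipate is verifying that the off-diagonal magnitudes sum to exactly $d_k$ despite the signs in $H$; this works precisely because squaring $\mathrm{sign}(r_{ml})$ kills the sign, so the analysis is essentially identical to that of a weighted graph Laplacian even though $HH^T$ is not one.
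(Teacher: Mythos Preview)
Your proof is correct and takes a genuinely different route from the paper's. The paper expands $\|\bB H\|_F^2$ edge-by-edge, applies the elementary inequality $(a-b)^2\le 2a^2+2b^2$ to each term, and regroups the resulting sum by vertices to obtain $\|\bB H\|_F^2\le\sum_k 2d_k\|\bb_k\|_2^2\le 2\max_k d_k$ under $\|\bB\|_F=1$. You instead identify $\|\Gamma\|$ with the spectral norm of $C$, reduce to bounding $\lambda_{\max}(HH^T)$, and invoke Gershgorin. The two arguments are close cousins---the Gershgorin disc at row $k$ has both center and radius equal to $d_k$, which is exactly the regrouped coefficient in the paper's bound---but your version makes the matrix structure explicit and handles tightness more cleanly: your single-edge example exhibits a graph with $\lambda_{\max}(HH^T)=2\max_k d_k$ directly, whereas the paper only notes that $(a-b)^2\le 2a^2+2b^2$ is tight pointwise, which does not by itself give equality in the final bound for any fixed graph. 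One small slip: the direction along which you concentrate $\bV$ to attain equality is the top \emph{left} singular vector of $C$ (it lives in $\mathbb{R}^K$), not the right one; this does not affect the argument.
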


Given the results in Theorem \ref{thm:key}, now we present our
proximal-gradient method for GFlasso.
We substitute the penalty term in \eqref{eq:dual_norm} with its
smooth approximation $f_\mu(\bB)$ and obtain the smooth optimization
problem:
\begin{equation}
\label{eq:approx_glasso}
  \min_{\bB} \tilde{f}(\bB) \equiv \frac{1}{2} \|\bY-\bX\bB\|_2^2 +  f_\mu(\bB).
\end{equation}
According to Theorem \ref{thm:key}, the gradient of $\tilde{f}(\bB)$
is:
\begin{equation}
\label{eq:fgrad}
  \nabla \tilde{f}(\bB)= \bX^T(\bX \bB - \bY)  + \bA^{\ast}C^T.
\end{equation}
Note that $\nabla \tilde{f}(\bB)$ is Lipschitz continuous with the
Lipschitz constant  $L$ tightly upper bounded by $L_U$:
\begin{equation}
\label{eq:L}
  L=\lambda_{\max} (\bX^T\bX)+ L_{\mu} \leq \lambda_{\max}
  (\bX^T\bX) +\frac{\lambda^2+2\gamma^2\max_{k\in
  V}d_k}{\mu} \equiv L_U,
\end{equation}
where $\lambda_{\max} (\bX^T\bX)$ is the largest eigenvalue of
$(\bX^T\bX)$.

Instead of optimizing the original function $f(\bB)$ in
\eqref{eq:graph_obj}, we optimize $\tilde{f}(\bB)$. Since
$\tilde{f}(\bB)$ is a \emph{smooth} lower bound of $f(\bB)$, we can
adopt the accelerated gradient-descent method, so called Nesterov's
method \cite{Nesterov:05}, to minimize smooth $\tilde{f}(\bB)$ as
shown in Algorithm \ref{algo:gdglasso}.

\begin{algorithm}[!th]
\caption{Proximal-Gradient Method for GFlasso} \textbf{Input}:
$\bX$, $\bY$, $\lambda$, $\gamma$, graph structure $G$, desired
accuracy $\epsilon$.

\textbf{Initialization}:  Construct $C=(\lambda I, \gamma H)$;
compute $L_U$ according to \eqref{eq:L}; compute  $D$ in
\eqref{eq:D} and set $\mu=\frac{\epsilon}{2D}$; set $\bW^0
=\mathbf{0}\in \mathbb{R}^{J \times K}$;

\textbf{Iterate} For $t=0,1,2,\ldots$ until convergence of $\bB^t$:
\begin{enumerate}
\item Compute $\nabla \tilde{f}(\bW^t)$ according to
\eqref{eq:fgrad}.
\item Perform the gradient descent step :
$\bB^t= \bW^t - \frac{1}{L_U} \nabla \tilde{f}(\bW^t)$.
\item Set $ \bZ^t= -\frac{1}{L_U} \sum_{i=0}^t \frac{i+1}{2} \nabla
\tilde{f}(\bW^i)$.
\item Set $ \bW^{t+1} = \frac{t+1}{t+3}  \bB^t + \frac{2}{t+3} \bZ^t$.
\end{enumerate}
\textbf{Output}: $\hat{\bB} =\bB^t$ \label{algo:gdglasso}
\end{algorithm}

In contrast to the standard gradient-descent algorithm, Algorithm
\ref{algo:gdglasso} involves the updating of three sequences
$\{\bW^t\}$, $\{\bB^t\}$ and $\{\bZ^t\}$, where $\bB^t$ is obtained
from the gradient-descent update based on $\bW^t$ with the stepsize
$\frac{1}{L_U}$; $\bZ^t$ is the weighted combination of all previous
gradient information and $\bW^{t+1}$ is the convex combination of
$\bB^t$ and $\bZ^t$. Intuitively, the reason why this method is
superior to the standard gradient descent is that it utilizes all of
the gradient information from the first step to the current one for
each update, while the standard gradient-descent update is only
based on the gradient information at the current step.

\subsection{Complexity}
\label{subsec:complexity}

Although we optimize the approximation function $\tilde{f}$, it
still can be proven that the $\widehat{\bB}$ obtained from Algorithm
\ref{algo:gdglasso} is sufficiently close to the  optimal solution
$\bB^{\ast}$ to the original objective function in
\eqref{eq:graph_obj}. We present the convergence rate of Algorithm
\ref{algo:gdglasso} in the next theorem.

\begin{theorem}
\label{thm:complexity} Let $\bB^{\ast}$ be the optimal solution to
\eqref{eq:graph_obj} and $\bB^t$ be the intermediate solution at
the $t$-th iteration in Algorithm \ref{algo:gdglasso}. If we require
$f(\bB^t)-f(\bB^{\ast}) \leq \epsilon$ and set
$\mu=\frac{\epsilon}{2D}$, then the number of iterations $t$ is
upper bounded by:
\begin{equation}
\label{eq:bound}
 \sqrt{\frac{4
 \|\bB^{\ast}\|_F^2}{\epsilon}{\left(\lambda_{\max}(\bX^T\bX)+\frac{2D\|\Gamma\|_U^2}{\epsilon}\right)}},
\end{equation}
where $D$ and $\|\Gamma\|$  are as in \eqref{eq:D} and Lemma
\ref{lem:norm} respectively.
\end{theorem}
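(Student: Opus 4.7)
The plan is to split the optimality gap into a smoothing error and an optimization error for the smoothed problem, control each piece separately, and then choose $\mu$ to balance them. Writing
\begin{equation*}
f(\bB^t) - f(\bB^{\ast}) = \bigl[f(\bB^t) - \tilde f(\bB^t)\bigr] + \bigl[\tilde f(\bB^t) - \tilde f(\bB^{\ast})\bigr] + \bigl[\tilde f(\bB^{\ast}) - f(\bB^{\ast})\bigr],
\end{equation*}
the inequality $0 \leq f_0(\bB) - f_\mu(\bB) \leq \mu D$, which follows directly from the definitions of $f_\mu$ in \eqref{eq:fmu} and of $D$ in \eqref{eq:D}, bounds the first bracket by $\mu D$ and makes the third bracket nonpositive. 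Thus only the middle bracket is nontrivial: it measures the accuracy of the accelerated gradient iterates on the smoothed objective $\tilde f$.

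For that middle term I would invoke the convergence analysis of \cite{Nesterov:05}. The triple of sequences $\{\bW^t\}$, $\{\bB^t\}$, $\{\bZ^t\}$ defined in Algorithm \ref{algo:gdglasso} is precisely the estimate-sequence scheme from that paper applied to the $L_U$-smooth convex function $\tilde f$ with initial point $\bW^0 = \mathbf{0}$ and step-size $1/L_U$. The standard potential-function argument then delivers, for every $\bV$,
\begin{equation*}
\tilde f(\bB^t) - \tilde f(\bV) \leq \frac{2 L_U \|\bV - \bW^0\|_F^2}{(t+1)^2}.
\end{equation*}
Specializing $\bV = \bB^{\ast}$ and recalling $\bW^0 = \mathbf{0}$ bounds the middle bracket by $2 L_U \|\bB^{\ast}\|_F^2 / (t+1)^2$, without ever needing the minimizer of $\tilde f$ itself.

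Combining the three bounds yields $f(\bB^t) - f(\bB^{\ast}) \leq \mu D + 2 L_U \|\bB^{\ast}\|_F^2 / (t+1)^2$. Setting $\mu = \epsilon/(2D)$ forces the smoothing term to equal $\epsilon/2$ and, via \eqref{eq:L}, turns $L_U$ into $\lambda_{\max}(\bX^T\bX) + 2 D \|\Gamma\|_U^2 / \epsilon$. Demanding the optimization term be at most $\epsilon/2$ reduces to $(t+1)^2 \geq 4 L_U \|\bB^{\ast}\|_F^2 / \epsilon$, and taking a square root recovers the announced bound.

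The main obstacle is the accelerated $O(L_U/t^2)$ rate for the middle bracket; the rest is elementary algebra plus the already-established smoothness of $\tilde f$ from Theorem \ref{thm:key}. If a self-contained proof of the accelerated bound were required I would build the explicit estimate sequence $\phi_t(\bB)$ with $\phi_t^{\ast} \geq \frac{(t+1)(t+2)}{2}\tilde f(\bB^t)$, verify the recursion $\phi_{t+1}(\bB) \leq \phi_t(\bB) + \tfrac{t+2}{2}\bigl[\tilde f(\bW^{t+1}) + \langle \nabla \tilde f(\bW^{t+1}), \bB - \bW^{t+1}\rangle\bigr]$ by combining convexity of $\tilde f$ with the descent lemma for its $L_U$-Lipschitz gradient, and unroll the recursion at $\bB = \bB^{\ast}$. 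The convex-combination step $\bW^{t+1} = \tfrac{t+1}{t+3}\bB^t + \tfrac{2}{t+3}\bZ^t$ is exactly what makes this potential argument close.
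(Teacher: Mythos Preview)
Your proof is correct and follows essentially the same route as the paper: the same three-term decomposition of $f(\bB^t)-f(\bB^{\ast})$, the same $\mu D$ and nonpositivity bounds on the outer brackets, and the same appeal to the $O(L_U/t^2)$ accelerated-gradient rate from \cite{Nesterov:05} for the middle bracket, followed by the choice $\mu=\epsilon/(2D)$ and the resulting algebra. The only cosmetic difference is that you write $(t+1)^2$ where the paper writes $t^2$ in the Nesterov bound, which does not affect the conclusion.
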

The key idea behind the proof is to decompose
$f(\bB^t)-f(\bB^{\ast})$ into 3 parts: (i)
$f(\bB^t)-\tilde{f}(\bB^t)$, (ii)
$\tilde{f}(\bB^t)-\tilde{f}(\bB^{\ast})$, (iii)
$\tilde{f}(\bB^{\ast})-f(\bB^{\ast})$.  (i) and (iii) can be bounded
by the gap of the approximation $\mu D$. Since $\tilde{f}$ is a
smooth function, we can bound (ii) by the accuracy bound when
applying the accelerated gradient method to minimize smooth
functions \cite{Nesterov:05}. We obtain \eqref{eq:bound} by
balancing these three terms. The details of the proof are presented
in Appendix. According to Theorem \ref{thm:complexity}, Algorithm
\ref{algo:gdglasso} converges in $O(\frac{\sqrt{2D}}{\epsilon})$
iterations, which is much faster than the subgradient method with
the convergence rate of $O(\frac{1}{\epsilon^2})$. Note that the
convergence rate of our method depends on $D$ through the term
$\sqrt{2D}$, which again depends on the problem size with
$D=J(K+|E|)/2$.

Theorem \ref{thm:complexity} suggests that a good strategy for
choosing the parameter $\mu$ in Algorithm \ref{algo:gdglasso} is to
set $\mu=\frac{\epsilon}{2D}$, where $D$ is determined by the
problem size. Instead of fixing the value for $\epsilon$, we
directly set $\mu$ or the ratio of $\epsilon$ and $D$ to a constant,
because this automatically has the effect of scaling $\epsilon$
according to the problem size without affecting the quality of the
solution.

Assuming that we pre-compute and store $\bX^T\bX$ and $\bX^T\bY$
with the time complexity of $O(J^2N+JKN)$, the main computational
cost is to calculate the gradient $\nabla \tilde{f}(\bW_t)$ with the
time complexity of $O(J^2K+ J|E|)$ in each iteration. Note that the
per-iteration complexity of our method is (i) independent of sample
size $N$, which can be very large for large-scale applications, and
(ii) linear in the number of edges $|E|$, which can also be large in many cases. 
In comparison, the second-order method such as SOCP has a much
higher complexity per iteration. According to \cite{SOCP:98}, SOCP
costs $O(J^2(K+|E|)^2(KN+JK+J|E|))$ per iteration, thus cubic in the
number of edges and linear in sample size. Moreover, each IPM
iteration of SOCP requires significantly more memory to store the
Newton linear system.

\subsection{Proximal-Gradient Method for General Fused Lasso }

The proximal-gradient method for GFlasso that we presented in the
previous section can be easily adopted to efficiently solve any
types of optimization problems with a smooth convex loss function
and fusion penalties such as fused-lasso regression and fused-lasso
signal approximator \cite{Tibshirani:05,Holger:09}. We emphasize
that our method can be applied to the fusion penalty defined on an
arbitrary graph structure, while the widely adopted pathwise
coordinate method is only known to be applied to the fusion penalty
defined on special graph structures, i.e., chain and two-way grid.
For example, the general fused lasso solves a univariate regression
problem with a graph fusion penalty as follows:
\begin{equation}
\label{eq:fused}
  \widehat{\bb}=\argmin_{\bb} \frac{1}{2} \|\by-\bX\bb\|_2^2
  +\lambda \sum_{j=1}^J |\beta_j| +\gamma \sum_{e=(m,l)\in
   E}|\beta_m-\beta_l|,
\end{equation}
where $\|\cdot\|_2$ denotes a vector $\ell_2$-norm, $\by \in
\mathbb{R}^N$ is the univariate response vector of length $N$, $\bX
\in \mathbb{R}^{N \times J}$ is the input matrix, $\bb \in
\mathbb{R}^J$ is the regression coefficient vector, and $E$ is the
edge set in graph $G=(V,E)$ with $V=\{1,\ldots,J\}$. Note that the
fusion penalty defined on inputs ordered in time as a chain (i.e.,
$\sum_{j=1}^{J-1} |\beta_{j+1}-\beta_j|$) \cite{Tibshirani:05} is a
special case of the penalty in \eqref{eq:fused}. It is
straightforward to apply our proximal-gradient method to solve
\eqref{eq:fused} with only a slight modification of the linear
mapping $\Gamma(\bb) \equiv C\bb$ and its adjoint
$\Gamma^{\ast}(\ba) \equiv C^T \ba$.


To the best of our knowledge, the only gradient-based method
available for optimizing \eqref{eq:fused} is the recent work
in \cite{Holger:09} that adopts a path algorithm. However, this
relatively complex algorithm works only for the case of $N>J$, and
does not have any theoretical guarantees on convergence. In contrast, our
method is more generally applicable, is simpler to implement,
and has a faster convergence rate.

\section{Asymptotic Consistency Analysis}

It is possible to derive results on the asymptotic behavior of the
GFlasso estimator that is analogous to the ones in lasso
\cite{Tibshirani:96} and fused lasso \cite{Tibshirani:05} when
$J$ and $K$ are fixed as $N \to \infty$. Assume $\bB$ is the true
coefficient matrix and $\widehat{\bB}_N$ is the estimator obtained
by optimizing \eqref{eq:graph_obj}. Also assume that the
regularization parameter $\lambda_N$ and $\gamma_N$ are functions of
$N$. We have the following statistical convergence result:

\begin{theorem}
If $\lambda_N/\sqrt{N} \to \lambda_0 \ge 0$, $\gamma_N/\sqrt{N} \to
\gamma_0 \ge 0$ and $C = \lim_{N \to \infty} \left( \frac{1}{N}
\sum_{i=1}^{N} \mathbf{x}_i\mathbf{x}_i^T\right)$ is non-singular,
where
$\mathbf{x}_i$ is the $i$-th row of $\mathbf{X}$, 
then
    \begin{eqnarray}
    \sqrt{N}(\hat{\mathbf{B}}_N - \mathbf{B}) \to_{\substack{d}}
    \arg\!\min_{\mathbf{U}} V(\mathbf{U}),
    \end{eqnarray}
where
    \begin{eqnarray}
    V(\mathbf{U}) = -2\sum_k \mathbf{u}_k^T\mathbf{W}+\sum_k \mathbf{u}_k^TC\mathbf{u}_k +
        \lambda_0^{(1)} \sum_k \sum_j \big[u_{jk}
        \textrm{sign}(\beta_{jk})I(\beta_{jk} \neq 0)+|u_{jk}|I(\beta_{jk}=0)\big]
        \nonumber \\
        \quad+ \lambda_0^{(2)} \sum_{e=(m,l)\in E} \tau(r_{ml}) \sum_j
        \big[ u_{je}' \mysgn( \beta_{je}')
        I(\beta_{je}' \neq 0 )
        +|u_{je}'|I(\beta_{je}'=0)\big]
        \nonumber
    \end{eqnarray}
with $u_{je}'=u_{jm}-\mysgn(r_{m,l}) u_{jl}$ and
$\beta_{je}'=\beta_{jm}-\mysgn(r_{ml}) \beta_{jl}$, and $\mathbf{W}$
has an $N(\mathbf{0},\sigma^2 C)$ distribution.
\label{thm:statconvergence}
\end{theorem}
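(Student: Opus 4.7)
The plan is to follow the Knight--Fu style recentering-and-rescaling argument (as used originally for lasso in Knight and Fu 2000 and extended to fused lasso in \cite{Tibshirani:05}), adapted to the matrix-valued graph-guided setting. Specifically, I would define
\begin{equation*}
V_N(\mathbf{U}) \equiv 2\left[f\!\left(\mathbf{B}+\mathbf{U}/\sqrt{N}\right)-f(\mathbf{B})\right],
\end{equation*}
so that $V_N$ is convex in $\mathbf{U}$ and its unique minimizer equals $\sqrt{N}(\widehat{\mathbf{B}}_N-\mathbf{B})$. The strategy is to show that $V_N(\mathbf{U})\to_d V(\mathbf{U})$ pointwise, and then invoke the standard convex-minimizer convergence result (Geyer 1994 / Hjort--Pollard) to transfer the convergence to the argmins.

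For the pointwise limit I would treat the three summands in $f$ separately. The quadratic loss contributes
\begin{equation*}
\sum_k\!\left(-\tfrac{2}{\sqrt{N}}\,\mathbf{u}_k^{T}\mathbf{X}^{T}\boldsymbol{\epsilon}_k+\tfrac{1}{N}\,\mathbf{u}_k^{T}\mathbf{X}^{T}\mathbf{X}\mathbf{u}_k\right),
\end{equation*}
where the CLT gives $\tfrac{1}{\sqrt{N}}\mathbf{X}^{T}\boldsymbol{\epsilon}_k\to_d \mathbf{W}_k\sim N(\mathbf{0},\sigma^{2}C)$ and the LLN (under the non-singularity assumption on $C$) gives $\tfrac{1}{N}\mathbf{X}^{T}\mathbf{X}\to C$, producing the $-2\sum_k\mathbf{u}_k^{T}\mathbf{W}+\sum_k\mathbf{u}_k^{T}C\mathbf{u}_k$ part of $V$. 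For the $\ell_1$ term, I would do the per-coordinate expansion $\lambda_N[|\beta_{jk}+u_{jk}/\sqrt{N}|-|\beta_{jk}|]$: when $\beta_{jk}\neq 0$ this equals $(\lambda_N/\sqrt{N})\,u_{jk}\,\mathrm{sign}(\beta_{jk})$ for all $N$ large enough and tends to $\lambda_0^{(1)} u_{jk}\,\mathrm{sign}(\beta_{jk})$; when $\beta_{jk}=0$ it equals $(\lambda_N/\sqrt{N})|u_{jk}|\to \lambda_0^{(1)}|u_{jk}|$. The fusion penalty is handled identically after rewriting in terms of the edge variables $\beta_{je}'=\beta_{jm}-\mysgn(r_{ml})\beta_{jl}$ and $u_{je}'=u_{jm}-\mysgn(r_{ml})u_{jl}$, using $\gamma_N/\sqrt{N}\to\gamma_0$, and splitting cases according to whether $\beta_{je}'$ vanishes. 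Adding the three limits and invoking Slutsky yields $V_N(\mathbf{U})\to_d V(\mathbf{U})$ for every fixed $\mathbf{U}$.

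The final step is the argmin-transfer. Because each $V_N$ is a sum of a convex quadratic and absolute-value penalties, $V_N$ is convex; since $C$ is positive definite, the limit $V$ is strictly convex in $\mathbf{U}$, hence has a unique minimizer almost surely. Pointwise convergence in distribution of a sequence of convex functions to a function with a unique minimizer implies convergence in distribution of the minimizers, which gives the stated result. I expect the main obstacle to be the clean verification of this convexity argmin lemma in our setting, and bookkeeping for the fusion-penalty case analysis: even though individual $\beta_{jm},\beta_{jl}$ may be nonzero, the edge difference $\beta_{je}'$ can vanish, so the sign/indicator decomposition must be carried out at the edge level rather than at the entry level, exactly as in the statement of $V$. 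Once the case split is written in the variables $(\beta_{je}',u_{je}')$, the argument is a direct extension of the classical lasso/fused-lasso proof.
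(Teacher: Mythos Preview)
Your proposal is correct and follows essentially the same approach as the paper: both define a recentered-rescaled objective $V_N(\mathbf{U})$ whose minimizer is $\sqrt{N}(\widehat{\mathbf{B}}_N-\mathbf{B})$, establish the pointwise limit $V_N\to_d V$ by treating the quadratic loss (via CLT/LLN), the $\ell_1$ term, and the fusion term (at the edge-difference level) separately, and then appeal to convexity and uniqueness of the minimizer of $V$ for the argmin transfer. Your write-up is in fact more explicit than the paper's about the CLT/LLN step, the edge-level case split, and the argmin lemma being invoked.
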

The proof of the theorem is provided in Appendix.

\section{Experiments}

In this section, we demonstrate the performance of GFLasso on both
simulated and real data, and show the superiority of our
proximal-gradient method (Prox-Grad) to the existing optimization
methods.
Based on our experience for a range of values for $\mu$, we use $\mu=10^{-4}$
in all of our experiments, since it provided us reasonably good
approximation accuracies across problems of different scales. The
regularization parameters $\lambda$ and $\gamma$ are chosen by
cross-validation. The code is
written in MATLAB and we terminate our optimization procedure when
the relative changes in the objective is below $10^{-6}$.

\subsection{Simulation Study}

\subsubsection{Performance of GFlasso on the Recovery of Sparsity}

We conduct a simulation study to evaluate the performance of the
proposed GFlasso, and compare the results with those from lasso and
\lmult-regularized multi-task regression.

We simulate data using the following scenario analogous to genetic
association mapping with $K = 10$, $J = 30$ and $N = 100$. To
simulate the input data, we use the genotypes of the 60 individuals
from the parents of the HapMap CEU panel, and generate genotypes for
additional 40 individuals by randomly mating the original 60
individuals. We generate the regression coefficients
$\bm{\beta}_k$'s such that the output $\mathbf{y}_k$'s are
correlated with a block-like structure in the correlation matrix. We
first choose input-output pairs with non-zero regression
coefficients as we describe below. We assume three groups of
correlated output variables of sizes 3, 3, and 4. Three relevant
inputs are randomly selected for the first group of outputs, and
four relevant inputs are selected for each of the other two groups,
so that the shared relevant inputs induce correlation among the
outputs within each cluster. In addition, we assume another relevant
input for outputs in both of the first two clusters in order to
model the situation of a higher-level correlation structure across
two subgraphs. Finally, we assume one additional relevant input for
all of the phenotypes. Given the sparsity pattern of $\bB$, we set
all non-zero $\beta_{i,j}$ to a constant $b$ to construct the true
coefficient matrix $\bB$.  Then, we simulate output data based on
the linear-regression model with noise distributed as $N(0,1)$ as in
\eqref{eq:m}, using the simulated genotypes as covariates.

We select the values of the regularization parameters $\lambda$ and $\gamma$ by
using $(N-30)$ samples out of the total $N$ samples as a training
set, and the remaining 30 samples as a validation set. Then,
we use the entire dataset
of size $N$ to estimate the final regression coefficients given the
selected regularization parameters.

\begin{figure}[!t]\centering
\begin{tabular}{@{}c@{}c@{}c@{}c}
\parbox[l]{1.2cm}{
\vspace{-3pt}
\includegraphics[scale = 0.77]{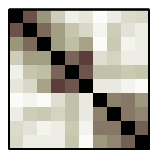}} &
\parbox[l]{1.2cm}{
\vspace{-3pt}
\includegraphics[scale = 0.77]{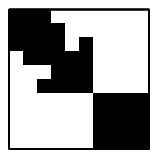}} &
\parbox[l]{5.6cm}{
\includegraphics[scale = 0.77, angle=90]{./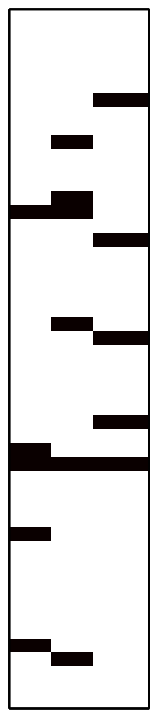}} &
\parbox[l]{5.6cm}{
\includegraphics[scale = 0.77, angle=90]{./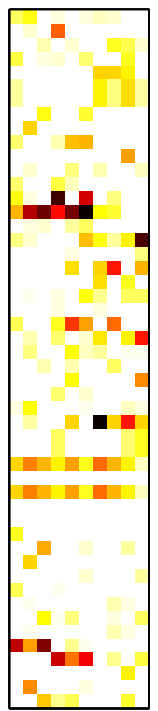}} \\
(a) & (b) & (c)  & (d) \\
& &
\parbox[l]{5.6cm}{
\includegraphics[scale = 0.77, angle=90]{./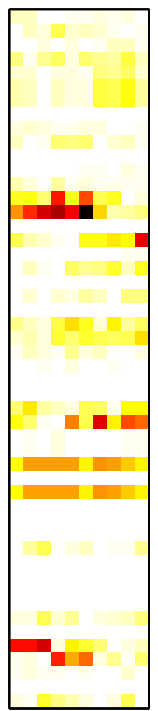}} &
\parbox[l]{5.6cm}{
\includegraphics[scale = 0.77, angle=90]{./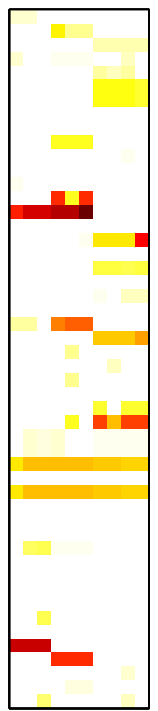}} \\
& & (e) & (f)  \\
\end{tabular}
\caption{Regression coefficients estimated by different methods
based on a single simulated dataset. $b= 0.8$ and threshold
$\rho=0.3$ for the output correlation graph are used. Red pixels
indicate large values. (a) The correlation coefficient matrix of
phenotypes, (b) the edges of the phenotype correlation graph
obtained at threshold 0.3 are shown as white pixels, (c) the true
regression coefficients used in simulation. Absolute values of the
estimated regression coefficients are shown for (d) lasso, (e)
\lmult-regularized multi-task regression, (f) GFlasso. Rows
correspond to outputs and columns to inputs. } \label{fig:sim_b_img}
\end{figure}
\begin{figure}[!t]\centering
\begin{tabular}{cccc}
\includegraphics[scale = 0.4]{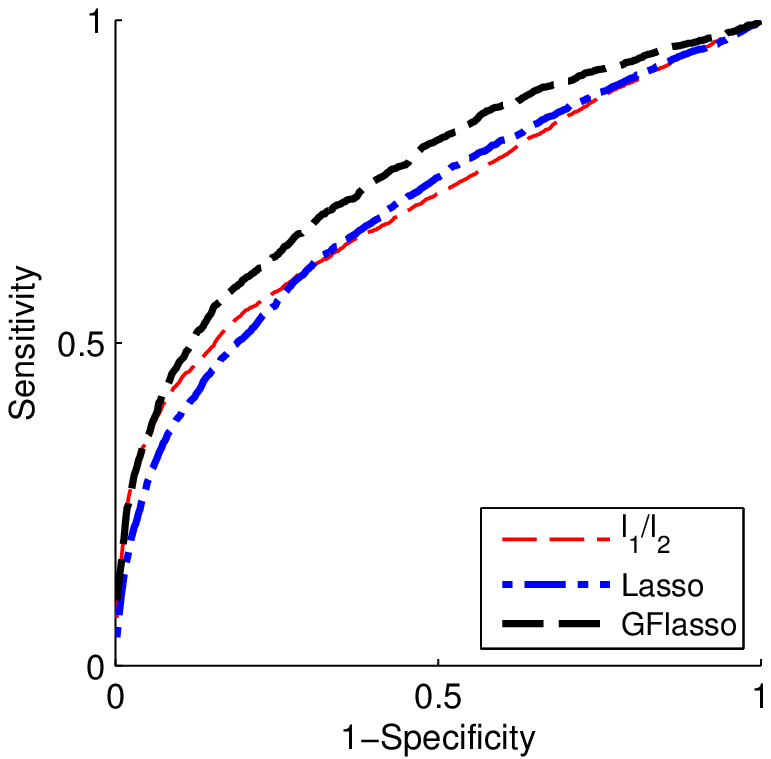} &
\includegraphics[scale = 0.4]{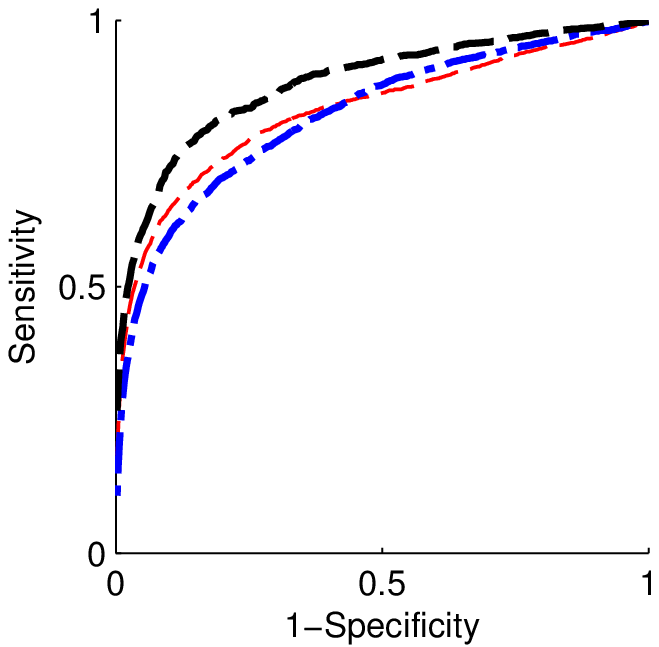} &
\includegraphics[scale = 0.4]{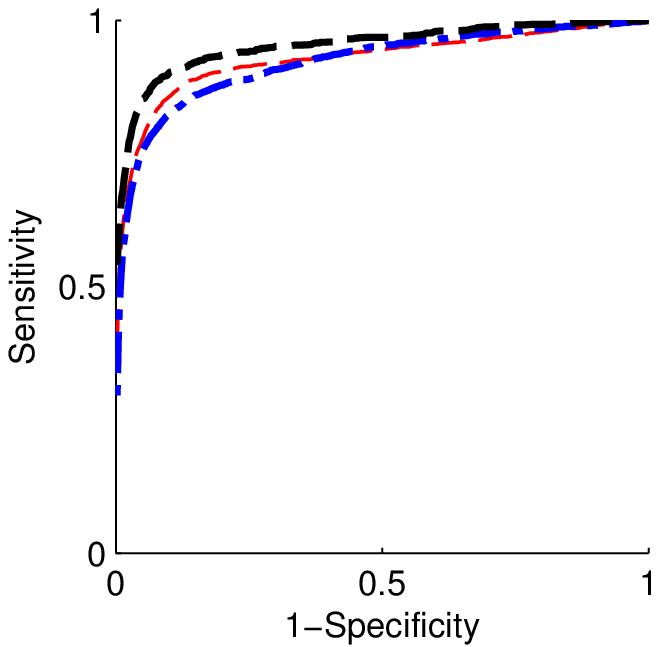} &
\includegraphics[scale = 0.4]{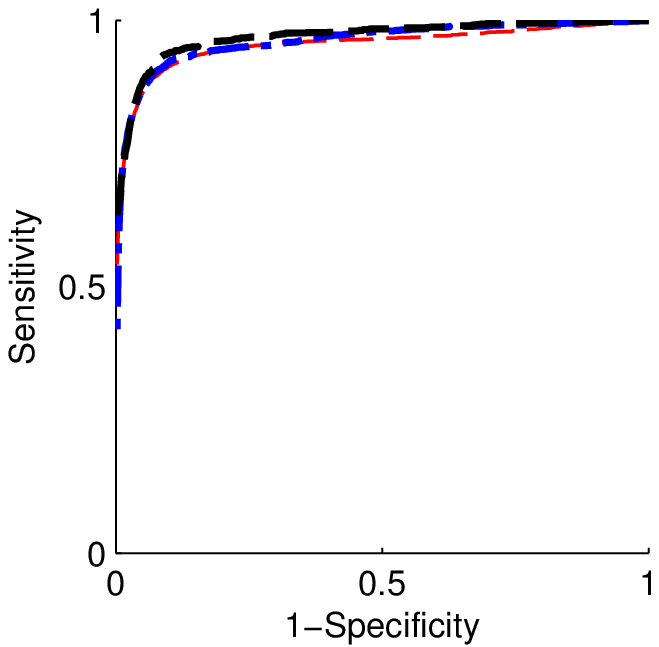} \\
(a) & (b) & (c) & (d)
\end{tabular}
\caption{ROC curves for comparing different sparse regression
methods with varying signal-to-noise ratios. The $b$ is set to (a) 0.3, (b) 0.5,
(c) 0.8, and (d) 1.0. } \label{fig:sim_b}
\end{figure}
\begin{figure}[!t]\centering
\begin{tabular}{cccc}
\includegraphics[scale = 0.4]{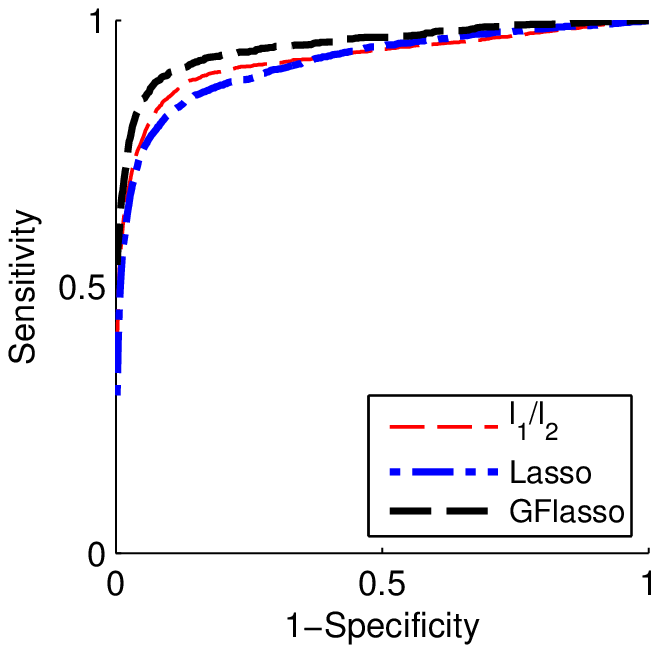} &
\includegraphics[scale = 0.4]{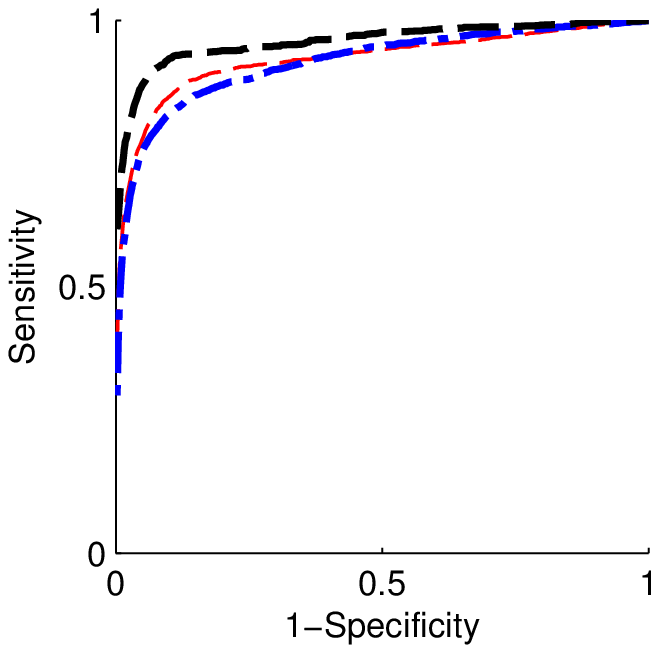} &
\includegraphics[scale = 0.4]{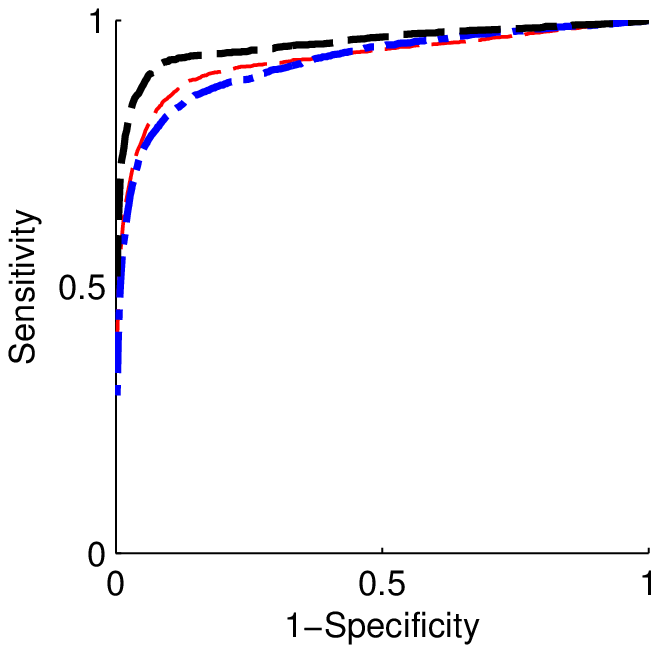} &
\includegraphics[scale = 0.4]{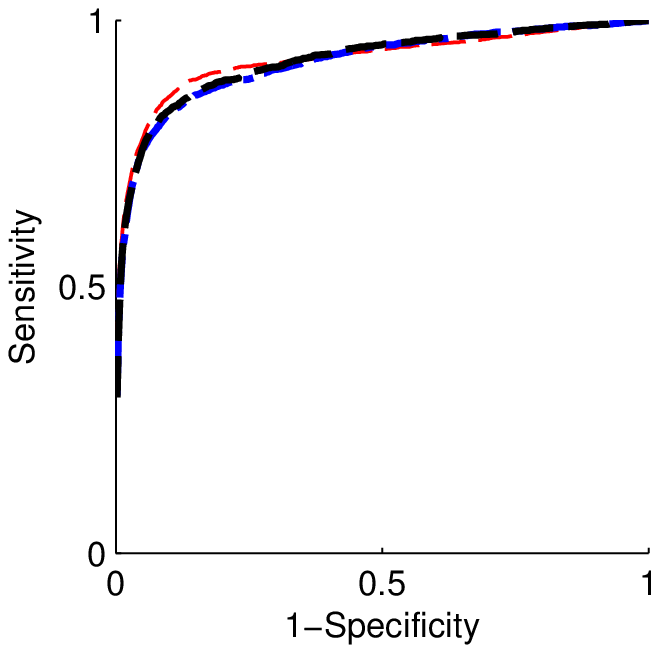} \\
(a) & (b) & (c) & (d)
\end{tabular}
\caption{ ROC curves for comparison of sparse regression methods
with varying thresholds ($\rho$'s) for output graph structures. (a)
$\rho$=0.1, (b) $\rho$=0.3, (c) $\rho$=0.5, and (d) $\rho$=0.7.
We use $b=0.8$ for signal-to-noise ratio.} \label{fig:sim_th}
\end{figure}
\begin{figure}[!t]\centering
\begin{tabular}{cccc}
\includegraphics[scale = 0.4]{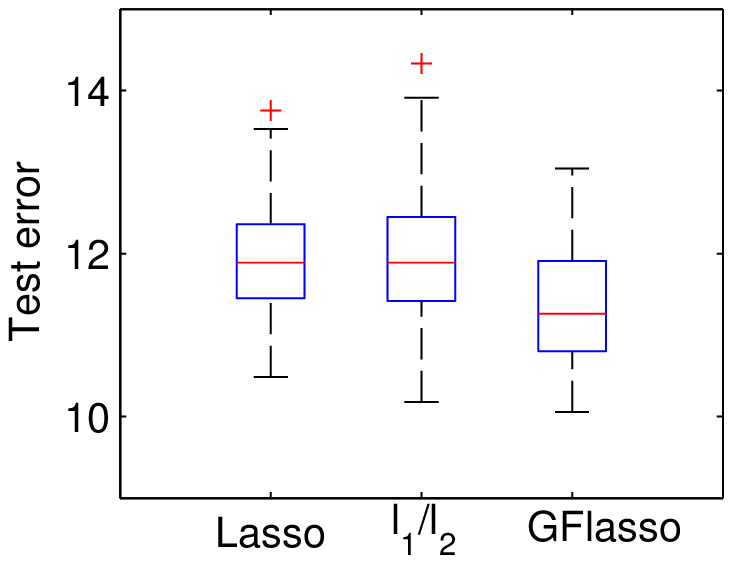} &
\includegraphics[scale = 0.4]{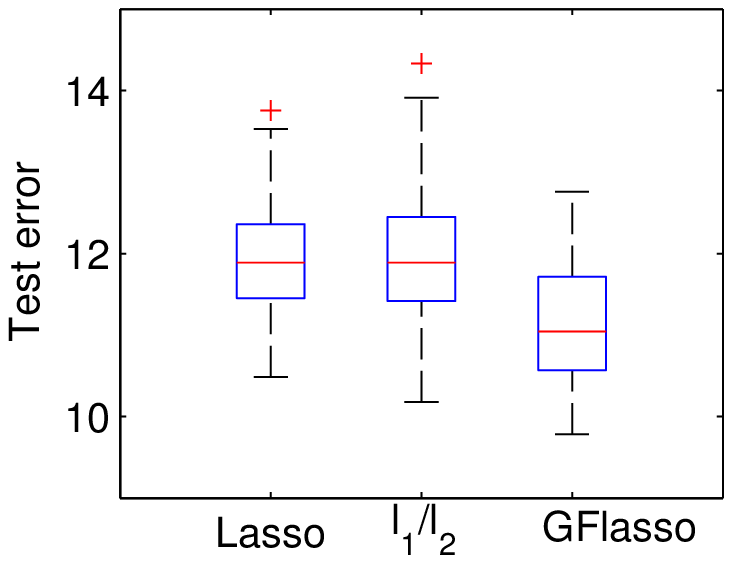} &
\includegraphics[scale = 0.4]{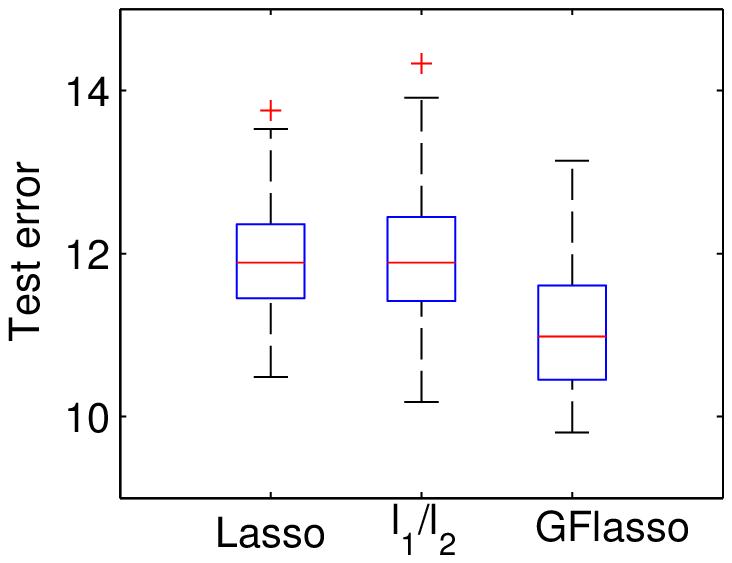} &
\includegraphics[scale = 0.4]{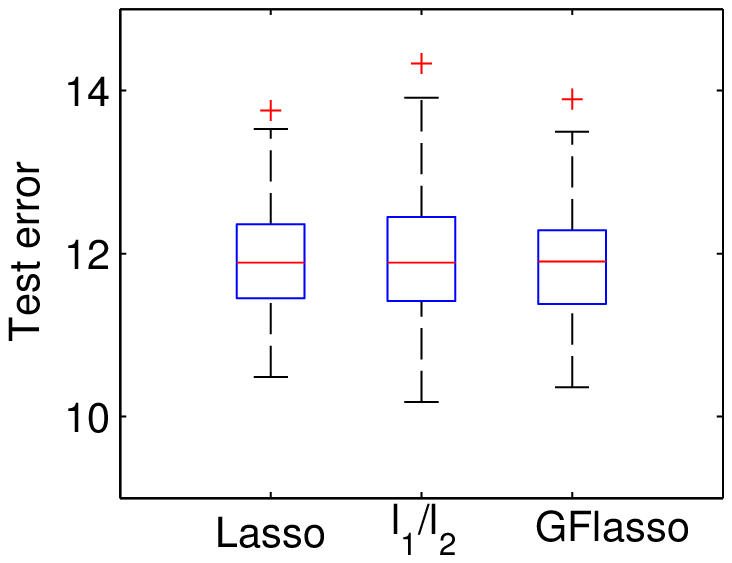} \\
(a) & (b) & (c) 
\end{tabular}
\caption{ Comparison of multi-task learning methods in terms of
prediction error. The threshold $\rho$ for the output correlation
graph is (a) $\rho$=0.1, (b) $\rho$=0.3, (c) $\rho$=0.5, and (d)
$\rho$=0.7.
We use $b=0.8$ for signal-to-noise ratio.
} \label{fig:sim_ts_err}
\end{figure}

As an illustrative example,
a graphical display of the estimated regression
coefficients from different methods
is shown in Figure \ref{fig:sim_b_img}. It is apparent
from Figures \ref{fig:sim_b_img}(d) and (e) that many false
positives show up in the results of lasso and \lmult-regularized
multi-task regression. On the other hand, the results from GFlasso
in Figure \ref{fig:sim_b_img}(f) show fewer false positives and
reveal clear block structures. This experiment suggests that
borrowing information across correlated outputs in the output
graph as in GFlasso can significantly increase the
power of discovering true relevant inputs.

We systematically and quantitatively evaluate the performance
of the various methods by computing sensitivity/specificity on the recovered
sets of relevant inputs and prediction errors averaged over 50 randomly
generated datasets.
We generate additional 50 individuals in each training dataset, and compute
the prediction error for this test dataset.
In order to examine how varying the signal-to-noise ratio affects
the performances of the different methods, we simulate datasets with
the non-zero elements of the regression coefficient matrix $\bB$ set
to $b=0.3$, $0.5$, $0.8$, and $1.0$, and compute the ROC curves as
shown in Figure \ref{fig:sim_b}. A threshold of $\rho$=0.1 is used
to generate output correlation graphs in GFlasso. We find that
GFlasso outperforms the other methods for all of the four chosen
signal-to-noise ratios.

Next, we examine the sensitivity of GFlasso to how the output
correlation graph is generated, by varying the threshold $\rho$ of
edge weights from 0.1 to 0.3, 0.5 and 0.7. With lower values of
$\rho$, more edges would be included in the graph, some of which
represent only weak correlations. The purpose of this experiment is
to see whether the performance of GFlasso is negatively affected by
the presence of these weak and possibly spurious edges that are
included due to noise rather than from a true correlation. The
results are presented in Figure \ref{fig:sim_th}. GFlasso exhibits a
greater power than all other methods even at a low threshold
$\rho$=0.1. As the threshold $\rho$ increases, the inferred graph strucutre
includes only those edges with significant correlations. When the
threshold becomes even higher, e.g., $\rho=0.7$, the number of edges
in the graph becomes close to 0, effectively removing the fusion
penalty. As a result, the performances of GFlasso approaches that of
lasso, and the two ROC curves almost entirely overlap (Figure
\ref{fig:sim_th}(d)). Overall, we conclude that when flexible
structured methods such as GFlasso are used, taking into account the
correlation structure in outputs improves the power of detecting
true relevant inputs regardless of the values for $\rho$. In addition,
once the  graph contains edges that capture strong correlations,
including more edges beyond this point by further lowering the
threshold $\rho$ does not significantly affect the performance of
GFlasso.

Figure \ref{fig:sim_ts_err} shows the prediction errors using
the models learned from the above experiments summarized in Figure
\ref{fig:sim_th}. It can be seen that GFlasso generally offers a
better predictive power than other methods, except for the case
where the set of edges for the graph becomes nearly empty due to
the high correlation threshold $\rho$=0.7 (Figure
\ref{fig:sim_ts_err}(d)). In this case, all of methods
perform similarly.

\subsubsection{Computation Time}

In this section, we compare the computation time of our Prox-Grad
with those of SOCP and QP formulations for solving GFlasso using
simulation data. We use SDPT3 package \cite{SDPT3} to solve the SOCP
formulation. For the QP formulation, we compare two packages, MOSEK
\cite{MOSEK} and CPLEX \cite{CPLEX}, and choose CPLEX since it
performes better in terms of computation time.  The computation time
is reported as the CPU time for one run  on the entire training set
using the best selected regularization parameters.

%

To compare the scalability of Prox-Grad with those of SOCP and QP,
we vary $J$, $N$, $K$, $\rho$ and present the computation time in
seconds in \emph{log-scale} in Figures \ref{fig:syn_scale}(a)-(d),
respectively. All of the experiments are performed on a PC with
Intel Core 2 Quad Q6600 CPU 2.4GHz CPU and 4GB RAM. We point out
that when we vary the threshold $\rho$ for generating the output
graph in Figure \ref{fig:syn_scale}(d), the increase of $\rho$
decreases the number of edges $|E|$ and hence reduces the
computation time. In Figure \ref{fig:syn_scale}, for large values of
$J$, $N$, $K$ and small values of $\rho$, we are unable to collect
results for SOCP and QP, because they lead to out-of-memory errors
due to the large storage requirement for solving the Newton linear
system.

In Figure \ref{fig:syn_scale}, we find that Prox-Grad is
substantially more efficient and can scale up to very
high-dimensional and large-scale datasets. QP is more efficient than
SOCP since it removes the non-smooth $\ell_1$ terms  by introducing
auxiliary variables for each $\ell_1$ term.
In addition,
we notice that the increase of $N$ does not increase the computation
time significantly. This is because $N$ only affects the computation time of
$\bX^T\bX$ and $\bX^T\by$, which can be pre-computed, and does not
affect the time complexity for each iteration during optimization.
This observation is consistent with our complexity analysis in
Section \ref{subsec:complexity}.

\begin{figure}[!th] \centering
  \begin{small}
  \begin{tabular}{cccc}
   \hspace{-0.4cm}  \includegraphics[height=3cm,width=3.5cm]{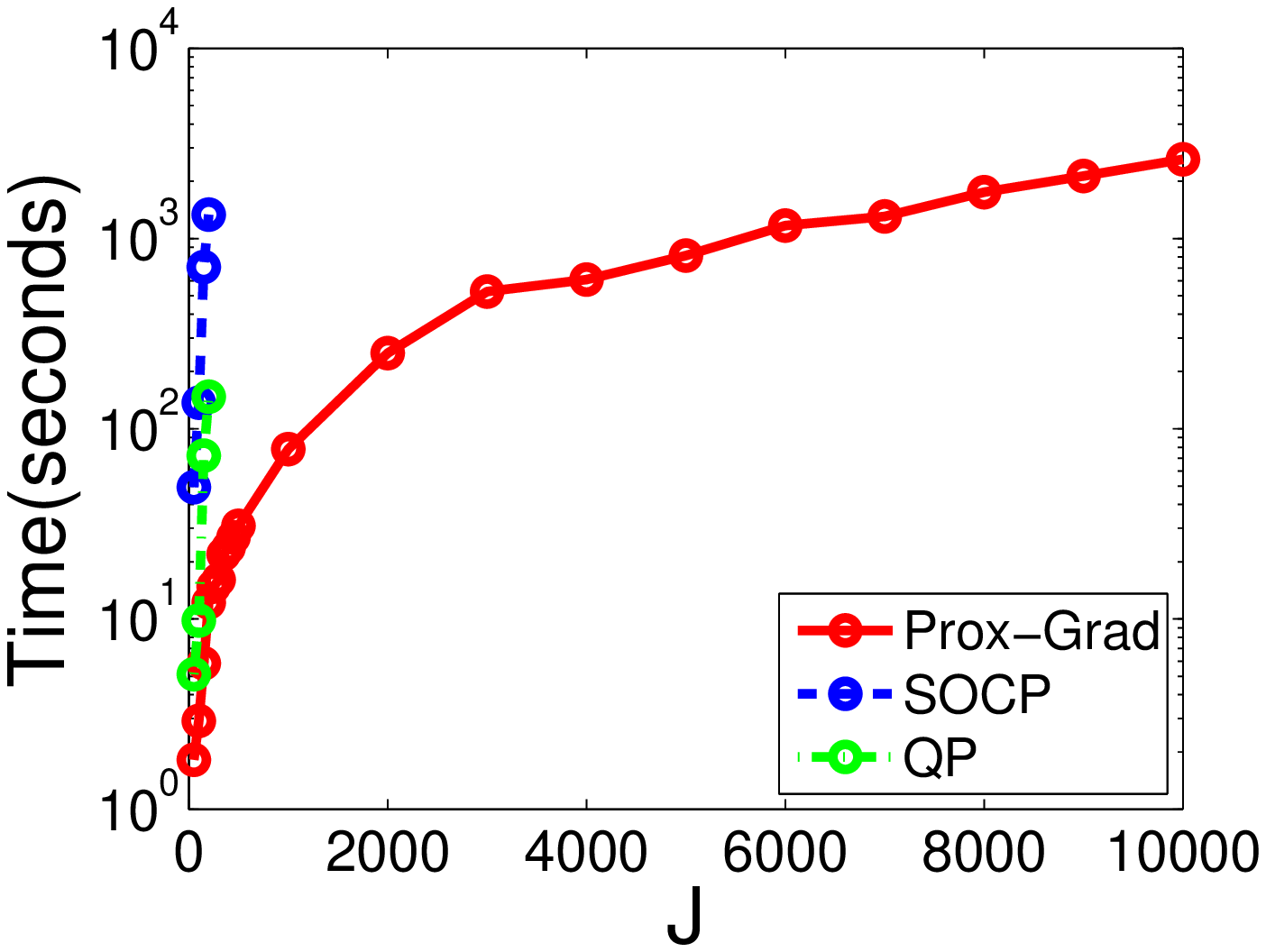}
    &    \hspace{-0.4cm} \includegraphics[height=3cm,width=3.5cm]{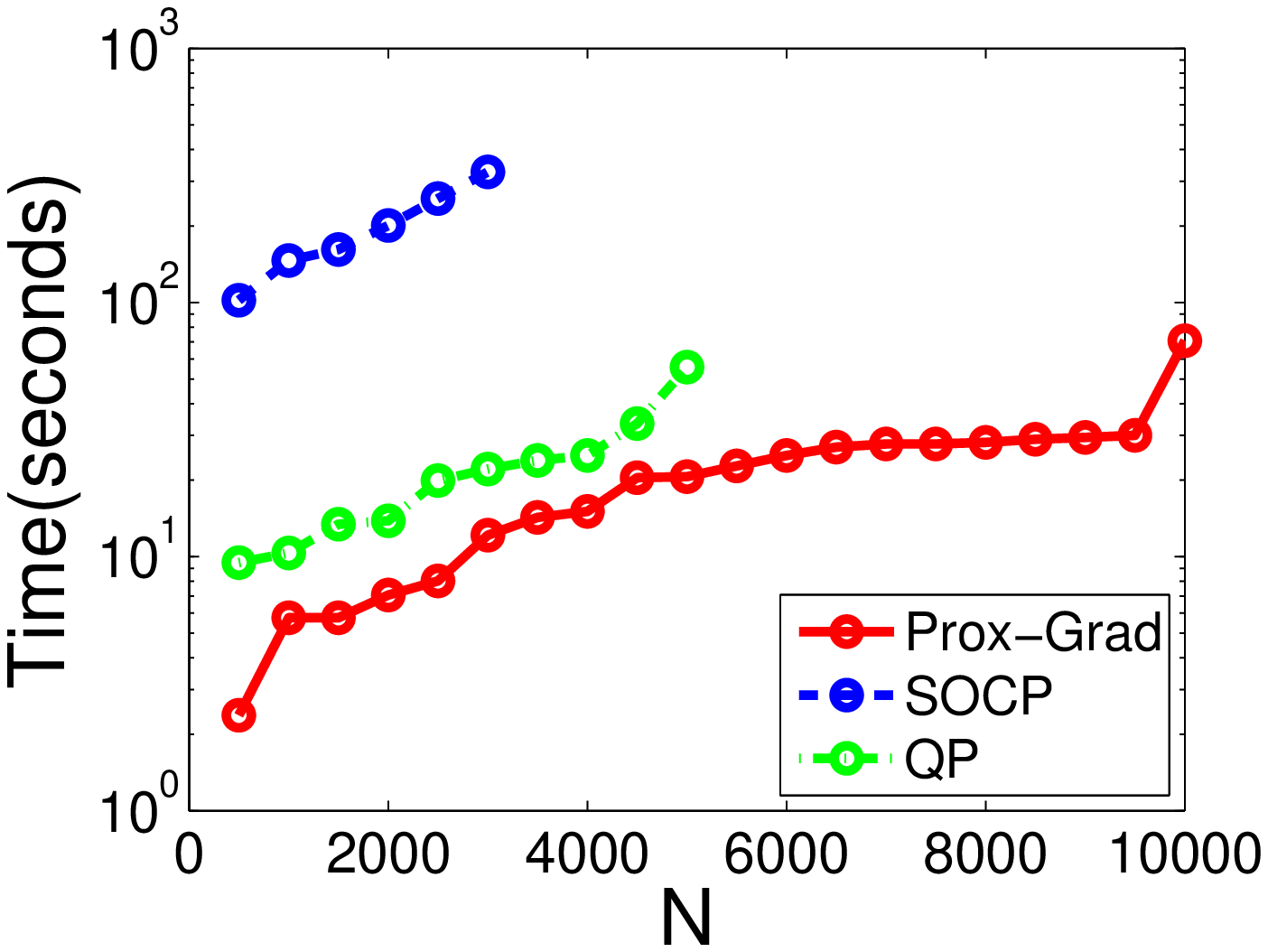}
    &    \hspace{-0.4cm} \includegraphics[height=3cm,width=3.5cm]{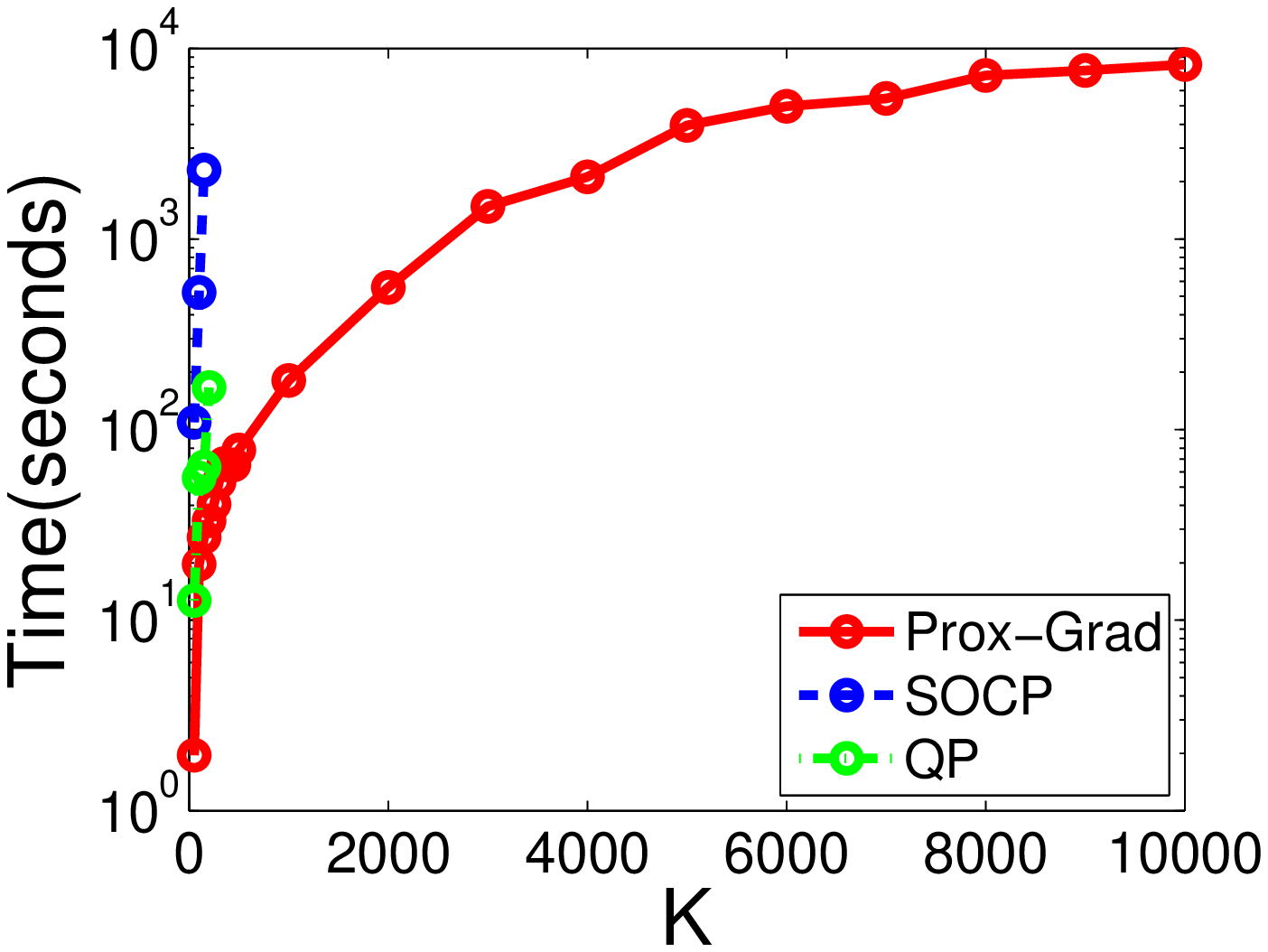} &
       \hspace{-0.4cm} \includegraphics[height=3cm,width=3.5cm]{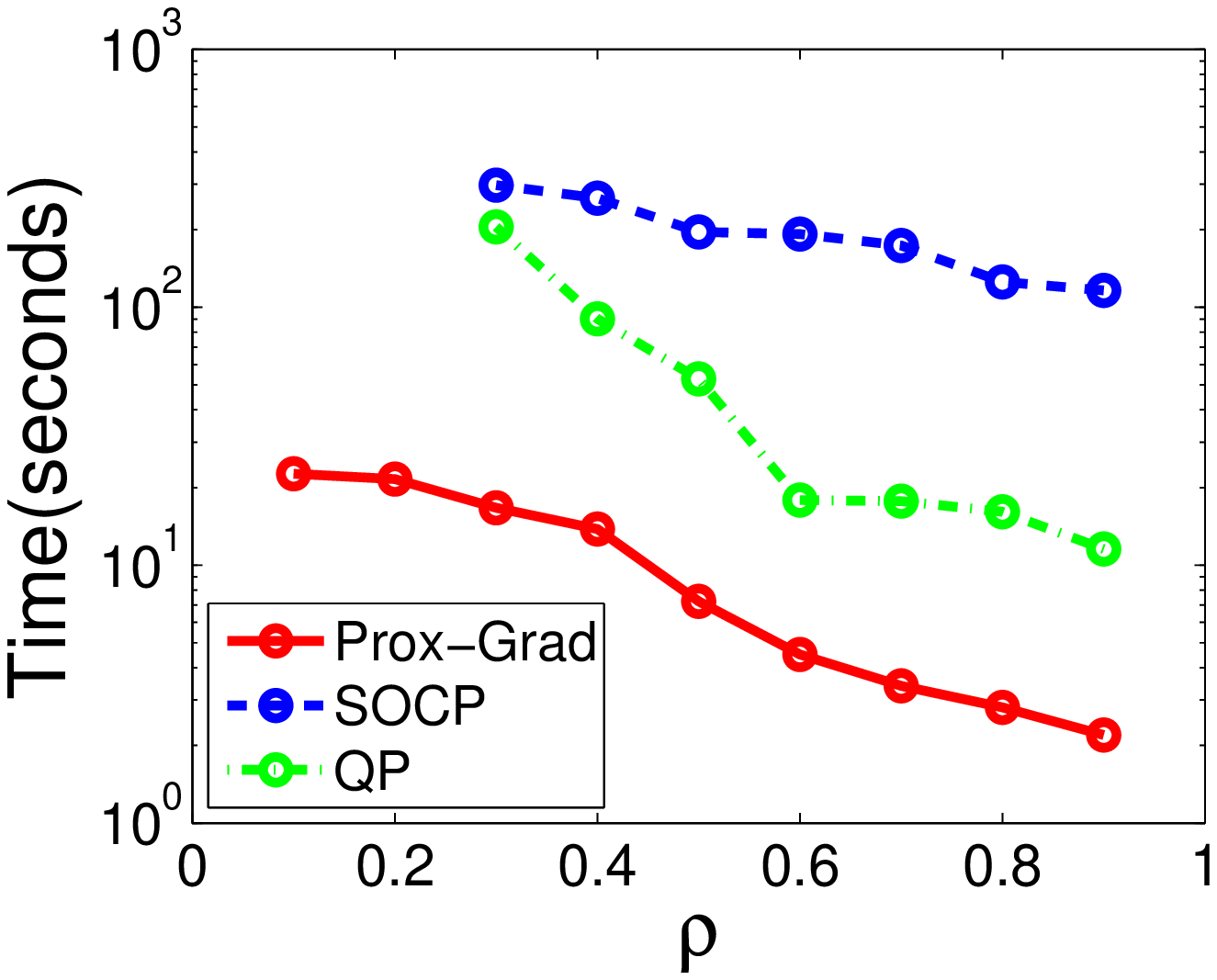}\\
   (a) & (b) & (c) & (d)
  \end{tabular}
  \caption{Comparisons of scalabilities of various optimization methods.
    For Prox-Grad, SCOP and QP, we
    (a) vary $J$ from $50$ to $500$ with a step size of $50$
    and then from $1000$ to $10,000$ with a step size of $1000$,
    fixing $N=1000, K=50$ and $\rho=0.5$,
    (b) vary $N$ from $500$ to $10000$ with a step size of $500$,
    fixing $J=100, K=50$ and $\rho=0.5$,
    (c) vary $K$ from $50$ to $500$ with a step size of
    50 and then from $1000$ to $10,000$ with a step size of $1000$,
    fixing $N=500, J=100$ and $\rho=0.5$,
    and
    (d) vary $\rho$ from 0.1 to 0.9 with a step size of $0.1$,
    fixing $N=500, J=100$ and $K=50$.
    Note that the y-axis denotes the computation time in seconds in \emph{log-scale}.}
  \label{fig:syn_scale}
  \end{small}
\end{figure}

\subsection{Asthma Dataset}

We apply GFlasso to 34 genetic markers and 53 clinical
phenotyes collected from 543 asthma patients as a part of the
Severe Asthma Research Program (SARP) \cite{sarp:2007}, and compare the
results with the ones from lasso and \lmult-regularized regression. Figure \ref{fig:sarp}(a) shows
the correlation matrix of the phenotypes after reordering the
variables using the agglomerative hierarchical clustering algorithm
so that highly correlated phenotypes are clustered with a block
structure along the diagonal. Using the threshold $\rho=0.7$, we fit
the standard lasso, \lmult-regularized multi-task regression, and GFlasso,
and show the estimated $\bm{\beta}_k$'s in Figures
\ref{fig:sarp}(c)-(e), with rows and columns representing phenotypes
and genotypes respectively. The phenotypes in rows are rearranged
according to the ordering given by the agglomerative hierarchical
clustering so that each row in Figures \ref{fig:sarp}(c)-(e) is
aligned with the phenotypes in the correlation matrix in Figure
\ref{fig:sarp}(a). We can see that the vertical bars in the GFlasso estimate in Figure
\ref{fig:sarp}(e) span the subset of highly correlated
phenotypes that correspond to blocks in Figure \ref{fig:sarp}(a).
This block structure is much weaker in the results from the lasso in
Figure \ref{fig:sarp}(c), and the blocks tend to span the entire set
of phenotypes in the results from the \lmult-regularized multi-task regression
in Figure \ref{fig:sarp}(d).



\begin{figure}[!th]
\centering
\begin{tabular}{@{}c@{}c@{}c@{}c@{}c}
\parbox[l]{3.0cm}{
\vspace{4pt}
\includegraphics[scale = 0.65]{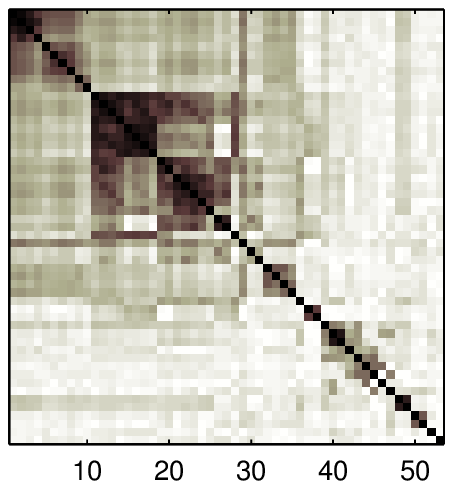}} &
\parbox[l]{3.0cm}{
\vspace{4pt}
\includegraphics[scale = 0.65]{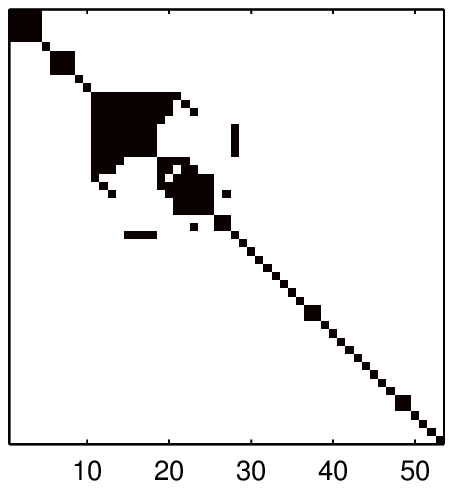}} &
\parbox[l]{2.0cm}{
\includegraphics[scale = 0.65, angle = 90]{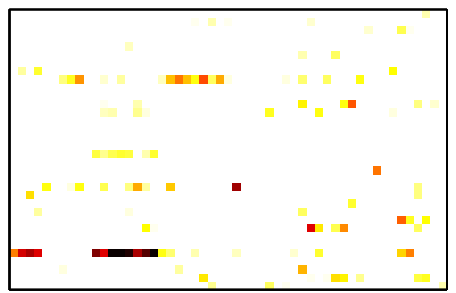}} &
\parbox[l]{2.0cm}{
\includegraphics[scale = 0.65, angle = 90]{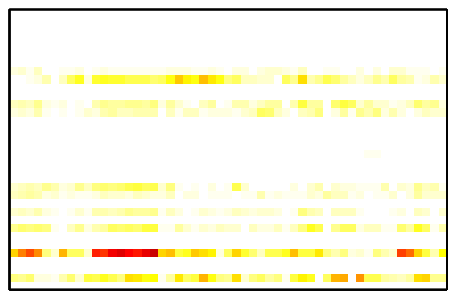}}  &
\parbox[l]{2.0cm}{
\includegraphics[scale = 0.65, angle = 90]{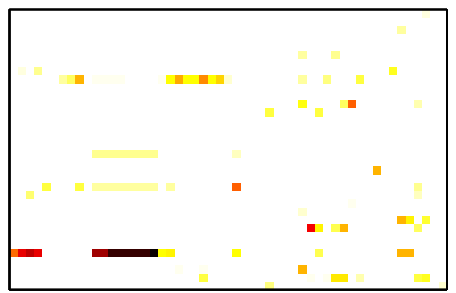}} \\
(a) & (b) & (c) & (d) & (e)
\end{tabular}
\caption{ Results for the association analysis of the asthma
dataset. (a) Phenotype correlation matrix. (b) Phenotype correlation
matrix thresholded at $\rho=0.7$. (c) lasso; (d)  \lmult-regularized
multi-task regression, and (e) GFlasso. } \label{fig:sarp}
\end{figure}

\section{Conclusions}

In this paper, we discuss a new method called GFlasso for structured
multi-task regression that exploits the correlation information in
the output variables during the estimation of regression
coefficients. GFlasso used a weighted graph structure as a guide to
find the set of relevant covariates that jointly affect highly
correlated outputs. In addition, we propose an efficient
optimization algorithm based on a proximal-gradient method that can
be used to solve GFlasso as well as any optimization problems
involving a smooth convex loss and fusion penalty defined on
arbitrary graph structures. Using simulated and asthma datasets, we
demonstrate that including richer information about output structure
as in GFlasso improves the performance for discovering true relevant
inputs, and that our proximal-gradient method is orders-of-magnitude
faster and more scalable than the standard optimization techniques
such as QP and SOCP.

\section*{Appendix}
\renewcommand{\thesection}{\Alph{section}}
\setcounter{section}{1}
\subsection{Proof of Theorem \ref{thm:key}}
The $f_\mu(\bB)$ is a convex function since it is the maximum
of a set of functions linear in $\bB$.

For the smoothness property, let the function $d^{\ast}$ be the Fenchel
conjugate of the distance function $d$ which is defined as:
\begin{equation}
\label{eq:dstar}
  d^*(\bg)=\max_{\bA \in \mathcal{Q}} \langle \bA,\bg \rangle - d (\bA).
\end{equation}
We want to prove $d^*$ is differentiable everywhere by showing that
the subdifferential $\partial d^*$ of $d^*$ is a singleton set for
any $\bg$.

By the definition in \eqref{eq:dstar}, we have, for any $\bg$ and
any $\bA \in \mathcal{Q}$:
\begin{equation}
\label{eq:conj1}
 d^*(\bg)+d(\bA)\geq \langle \bA,\bg \rangle,
\end{equation}
and the inequality holds as an equality if and only if
$\bA=\argmax_{\bA' \in \mathcal{Q}} \langle \bA',\bg \rangle - d
(\bA')$.

By the fact that for a convex and smooth function,
the conjugate of the conjugate of a function is the function itself
(Chapter E in \cite{Conv:01}), we have $d^{**}
\equiv d$. Then, \eqref{eq:conj1} can be written as:
\begin{equation}
\label{eq:conj2} d^*(\bg)+d^{**}(\bA)\geq \langle\bA,\bg\rangle,
\end{equation}
and the inequality holds as an equality if and only if
$\bg=\argmax_{\bg' \in \mathbb{R}^J}\langle\bA,\bg'\rangle -
d^*(\bg')$.

Since \eqref{eq:conj1} and \eqref{eq:conj2} are equivalent,  we know
that $\bA=\argmax_{\bA' \in \mathcal{Q}}\bA'^T \bg - d (\bA')$ if
and only if $\bg=\argmax_{\bg' \in
\mathbb{R}^J}\langle\bA,\bg'\rangle - d^*(\bg')$. The latter
equality implies that for any $\bg'$:
\begin{equation*}
d^*(\bg')\geq d^*(\bg)+\langle\bA,\bg'-\bg \rangle,
\end{equation*}
which further means that $\bA$ is a subgradient of $d^*$ at $\bg$ by
the definition of subgradient.

Summarizing  the above arguments, we conclude that $\bA$ is a
subgradient of $d^*$ at $\bg$ if and only if
\begin{equation}
\label{eq:my_max} \bA=\argmax_{\bA' \in \mathcal{Q}}\langle\bA', \bg
\rangle - d (\bA').
\end{equation}
Since $d$  is a strongly convex function, this maximization problem
in \eqref{eq:my_max} has a unique optimal solution, which means the
subdifferential $\partial d^*$ of $d^*$ at any point $\bg$ is a
singleton set that contains only $\bA$. Therefore, $d^*$ is differentiable
everywhere (Chapter D in \cite{Conv:01}) and $\bA$ is its gradient:
\begin{equation}
\label{eq:my_grad} \nabla d^*(\bg)=\bA=\argmax_{\bA' \in
\mathcal{Q}}\langle\bA', \bg \rangle - d (\bA').
\end{equation}

Now we return to our original problem of $f_\mu(\bB)$ and rewrite
it as:
$$f_\mu(\bB)=\max_{\bA \in \mathcal{Q}} \langle\bA, \Gamma(
\bB)\rangle - \mu d (\bA)=\mu\max_{\bA \in \mathcal{Q}} [
\langle\bA, \frac{\Gamma( \bB)}{\mu}\rangle -d(\bA)]=\mu
d^*(\frac{\Gamma( \bB)}{\mu}).$$

Utilizing \eqref{eq:my_grad} and  the chain rule, we know that
$f_\mu(\bB)$ is continuously differentiable  and its gradient takes
the following form:
\begin{eqnarray*}
\nabla f_\mu(\bB) &= &\mu \Gamma^*(\nabla d^*(\frac{\Gamma(
\bB)}{\mu}))=\mu \Gamma^* (\argmax_{\bA' \in \mathcal{Q}}
[\langle\bA', \frac{\Gamma( \bB)}{\mu}\rangle - d
(\bA')])\\&=&\Gamma^*(\argmax_{\bA' \in \mathcal{Q}}[\langle\bA',
\Gamma( \bB)\rangle - \mu d (\bA')])=\Gamma^*(\bA^*).
\end{eqnarray*}
For the proof of Lipschitz constant of $f_\mu(\bB)$, readers can
refer to \cite{Nesterov:05}.

\subsection{Proof of Lemma \ref{lem:norm}}

According to the definition of $\|\Gamma\|$, we have:
\begin{eqnarray*}
\|\Gamma\| &\equiv & \max_{\|\bB\|_F=1}\|\Gamma(\bB)\|_F =
\max_{\|\bB\|_F=1} \|(\lambda\bB,\gamma\bB H)\|_F
\\
&=&\max_{\|\bB\|_F=1}\sqrt{\lambda^2\|\bB\|_F^2+\gamma^2\|\bB
H\|_F^2} = \max_{\|\bB\|_F=1}\sqrt{\lambda^2+\gamma^2\|\bB H\|_F^2}
\end{eqnarray*}

Therefore, to bound $\|\Gamma\|$, we only need to find an upper
bound for $\max_{\|\bB\|_F=1}\|\bB H\|_F^2$.

According to the formulation of matrix $H$, we have
\begin{equation}
\label{app1} \|\bB H\|_F^2=\sum_{e=(m,l)\in
E}(\tau(r_{ml}))^2\sum_{j}(\beta_{jm}-\mbox{sign}(r_{ml})\beta_{jl})^2
\end{equation}

It is well known that $(a-b)^2\leq2a^2+2b^2$ and the inequality
holds as equality if and only if $a=-b$. Using this simple
inequality, for each edge $e=(m,l)\in E$, the summation
$\sum_{j}(\beta_{jm}-\mbox{sign}(r_{ml})\beta_{jl})^2$ is upper-bounded by
$\sum_{j}(2\beta_{jm}^2+2\beta_{jl}^2)=2\|\bb_m\|^2_2+2\|\bb_l\|^2_2$.
Here, the vectors $\bb_m$ and $\bb_l$ are the $m$-th and $l$-th
columns of $\bB$. The right-hand side of (\ref{app1}) can be further
bounded as:
\begin{equation*}
\begin{array}{ll}
\|\bB H\|_F^2&\leq\sum_{e=(m,l)\in E}2(\tau(r_{ml}))^2(\|\bb_m\|^2_2+\|\bb_l\|^2_2)\\
&=\sum_{k \in V}(\sum_{e \text{ incident on }k}2(\tau(r_{e}))^2)\|\bb_k\|^2_2\\
&=\sum_{k \in V} 2d_k\|\bb_k\|^2_2,
\end{array}
\end{equation*}
where $d_k$ is defined in \eqref{eq:d}. Note that the first
inequality is tight, and that the first equality can be obtained simply
by changing the order of summations.

By the definition of Frobenius norm,
$\|\bB\|_F^2=\sum_k\|\bb_k\|_2^2$. Hence, $$\max_{\|\bB\|_F=1}\|\bB
H\|_F^2\leq\max_{\|\bB\|_F=1}\sum_k
2d_k\|\bb_k\|^2_2=2\max_{k}d_k,$$ where the maximum is achieved by
setting the $\bb_k$ corresponding to the largest $d_k$ to be a unit
vector and setting other $\bb_k$'s to be zero vectors.

In summary,  $\|\Gamma\|$ can be tightly upper bounded as:
\begin{equation*}
\begin{array}{ll}
\|\Gamma\|&=\max_{\|\bB\|_F=1}\|\Gamma(\bB)\|_F\\
&=\max_{\|\bB\|_F=1}\sqrt{\lambda^2+\gamma^2\|\bB H\|_F}\\
&=\sqrt{\lambda^2+\gamma^2\max_{\|\bB\|_F=1}\|\bB H\|_F}\\
&\leq\sqrt{\lambda^2+2\gamma^2\max_{k}d_k}\equiv \|\Gamma\|_U.
\end{array}
\end{equation*}

\subsection{Proof of Theorem \ref{thm:complexity}}
Based on Theorem 2 in \cite{Nesterov:05}, we have the following
lemma:
\begin{lemma}
\label{lem:smooth} Assume that function $\tilde{f}(\bB)$ is an
arbitrary convex smooth function and its gradient $\nabla
\tilde{f}(\bB)$ is Lipschitz continuous with the Lipschitz constant
$L$ that is further upper-bounded by $L_U$. Apply Algorithm 1 to
minimize $\tilde{f}(\bB)$ and let $\bB^t$ be the approximate
solution at the $t$-th iteration. For any $\bB$, we have the
following bound:
\begin{equation}
  \label{eq:smoothbound}
  \tilde{f}(\bB^t)-\tilde{f}(\bB) \leq \frac{2L_U\|\bB\|_F^2}{t^2}.
\end{equation}
\end{lemma}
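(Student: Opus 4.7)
The plan is to follow Nesterov's estimate-sequence argument. First, I would use the $L_U$-smoothness of $\tilde{f}$ together with the gradient step $\bB^t = \bW^t - \tfrac{1}{L_U}\nabla\tilde{f}(\bW^t)$ to obtain the single-step descent inequality $\tilde{f}(\bB^t) \leq \tilde{f}(\bW^t) - \tfrac{1}{2L_U}\|\nabla\tilde{f}(\bW^t)\|_F^2$, which records the progress of a plain gradient step and will supply the key quadratic-in-gradient term.

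Second, I would introduce the quadratic potential function (``estimate function'')
\[
  \phi^t(\bU) \;\equiv\; \tfrac{L_U}{2}\|\bU\|_F^2 + \sum_{i=0}^{t-1}\tfrac{i+1}{2}\bigl[\tilde{f}(\bW^i) + \langle\nabla\tilde{f}(\bW^i),\,\bU - \bW^i\rangle\bigr],
\]
built around $\bW^0 = \mathbf{0}$, and set $A_t := \sum_{i=0}^{t-1}\tfrac{i+1}{2} = \tfrac{t(t+1)}{4}$. Setting $\nabla\phi^t = 0$ shows directly that $\phi^t$ is minimized at the weighted-gradient point $\bZ^{t-1}$ from Algorithm \ref{algo:gdglasso}, while applying convexity of $\tilde{f}$ to the linearizations inside the sum yields the upper bound
\[
  \phi^t_\ast \;:=\; \min_{\bU} \phi^t(\bU) \;\leq\; \phi^t(\bB) \;\leq\; A_t\,\tilde{f}(\bB) + \tfrac{L_U}{2}\|\bB\|_F^2 \qquad \forall\,\bB.
\]
The central claim, established by induction on $t$, is the matching lower bound $A_t\,\tilde{f}(\bB^{t-1}) \leq \phi^t_\ast$. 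Combining the two estimates and dividing by $A_t$ gives $\tilde{f}(\bB^{t-1}) - \tilde{f}(\bB) \leq \tfrac{2L_U\|\bB\|_F^2}{t(t+1)} \leq \tfrac{2L_U\|\bB\|_F^2}{t^2}$, which is the stated lemma up to a harmless shift of the iteration index.

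The main obstacle is the inductive step. Using the telescoping $\phi^{t+1}(\bU) = \phi^t(\bU) + \tfrac{t+1}{2}[\tilde{f}(\bW^t) + \langle\nabla\tilde{f}(\bW^t), \bU - \bW^t\rangle]$ together with the fact that $\phi^t$ is a pure quadratic with Hessian $L_U I$, so $\phi^t(\bU) = \phi^t_\ast + \tfrac{L_U}{2}\|\bU - \bZ^{t-1}\|_F^2$, I would evaluate $\phi^{t+1}$ at its new minimizer $\bZ^t = \bZ^{t-1} - \tfrac{t+1}{2L_U}\nabla\tilde{f}(\bW^t)$ to obtain a closed form for $\phi^{t+1}_\ast$. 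Invoking the inductive hypothesis and the subgradient inequality $\tilde{f}(\bB^{t-1}) \geq \tilde{f}(\bW^t) + \langle\nabla\tilde{f}(\bW^t), \bB^{t-1} - \bW^t\rangle$ then reduces the desired bound $\phi^{t+1}_\ast \geq A_{t+1}\tilde{f}(\bB^t)$ to the linear identity
\[
  A_{t+1}\,\bW^t \;=\; A_t\,\bB^{t-1} + \tfrac{t+1}{2}\,\bZ^{t-1}.
\]
The critical observation is that this identity is exactly what the particular convex-combination coefficients $\tfrac{t+1}{t+3}$ and $\tfrac{2}{t+3}$ in step 4 of Algorithm \ref{algo:gdglasso} are engineered to enforce (via the index shift $t \mapsto t-1$, which converts the update into $\bW^t = \tfrac{t}{t+2}\bB^{t-1} + \tfrac{2}{t+2}\bZ^{t-1}$). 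Once the identity holds, the cross term $\langle\nabla\tilde{f}(\bW^t), A_t\bB^{t-1} + \tfrac{t+1}{2}\bZ^{t-1} - A_{t+1}\bW^t\rangle$ vanishes, and the residual $\|\nabla\tilde{f}(\bW^t)\|_F^2$ contributions combine cleanly with the step-one descent inequality to close the induction. The base case $t=1$ reduces to a direct calculation using $\bW^0 = \mathbf{0}$, $\bB^0 = -L_U^{-1}\nabla\tilde{f}(\mathbf{0})$, and $\bZ^0 = \tfrac{1}{2}\bB^0$.
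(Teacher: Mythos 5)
Your proposal is correct, but it is worth noting that the paper does not actually prove this lemma at all: it simply imports it as Theorem~2 of Nesterov (2005). What you have written is a self-contained reconstruction of that result via the estimate-sequence technique, correctly specialized to the constants of Algorithm~\ref{algo:gdglasso} (prox center $\mathbf{0}$, prox function $\tfrac12\|\cdot\|_F^2$ with convexity parameter $1$, step $1/L_U$). I checked the bookkeeping: $A_t=\tfrac{t(t+1)}{4}$ gives $A_{t+1}-A_t=\tfrac{t+1}{2}$, so the weights in your linear identity sum correctly; dividing that identity by $A_{t+1}$ recovers exactly the update $\bW^{t}=\tfrac{t}{t+2}\bB^{t-1}+\tfrac{2}{t+2}\bZ^{t-1}$, i.e.\ step~4 of the algorithm after the index shift; the leftover gradient-norm terms close the induction because $A_{t+1}=\tfrac{(t+1)(t+2)}{4}\geq\tfrac{(t+1)^2}{4}$; and the descent inequality with step $1/L_U$ holds for any $L\leq L_U$, which is the only place the upper bound $L_U$ on the true Lipschitz constant is needed. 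Your final bound $\tilde{f}(\bB^{t})-\tilde{f}(\bB)\leq\tfrac{2L_U\|\bB\|_F^2}{(t+1)(t+2)}$ is in fact slightly stronger than the stated $\tfrac{2L_U\|\bB\|_F^2}{t^2}$, and matches the constant in Nesterov's Theorem~2. In short: the paper's ``proof'' is a citation; yours supplies the missing argument, and it is sound.
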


Based on Lemma \ref{lem:smooth}, we present our proof.  Recall that
the smooth approximation of the function $f(\bB)$, $\tilde{f}(\bB)$,
is defined as:
\begin{equation*}
    \tilde{f}(\bB)\equiv \frac{1}{2}
\|\bY-\bX\bB\|_F^2 + f_\mu(\bB) = \frac{1}{2} \|\bY-\bX\bB\|_F^2 +
\max_{\bA \in \mathcal{Q}} \langle \bA,\bB C \rangle -
\frac{1}{2}\|\bA\|_F^2.
\end{equation*}
Since Algorithm 1 optimizes the smooth function
$\tilde{f}(\bB)$, according to Lemma \ref{lem:smooth}, we have
\begin{equation}
  \label{eq:smoothbound1}
  \tilde{f}(\bB^t)-\tilde{f}(\bB^{\ast}) \leq \frac{2L_U\|\bB^{\ast}\|_F^2}{t^2},
\end{equation}
where $L_U=\lambda_{\max} (\bX^T\bX) + \frac{\|\Gamma\|_U^2}{\mu}$
is the upper bound of the Lipschitz constant for $\nabla
\tilde{f}(\bB)$.

We want to utilize the bound in \eqref{eq:smoothbound1}; so we
decompose $f(\bB^t)-f(\bB^{\ast})$ into three terms:
\begin{equation}
\label{eq:decompose}
  f(\bB^t)-f(\bB^{\ast})=\left(f(\bB^t)-\tilde{f}(\bB^t)\right) + \left( \tilde{f}(\bB^t)
  -\tilde{f}(\bB^{\ast})\right) + \left( \tilde{f}(\bB^{\ast}) -  f(\bB^{\ast})\right).
\end{equation}
According to the definition of $\tilde{f}$, we know that for any
$\bB$,
\begin{equation*}
  \tilde{f}(\bB) \leq f(\bB)  \leq   \tilde{f}(\bB) + \mu D,
\end{equation*}
where $D \equiv \max_{\bA \in \mathcal{Q}}d(\bA)$. Therefore, the
first term in \eqref{eq:decompose}, $f(\bB^t)-\tilde{f}(\bB^t)$, is
upper-bounded by  $\mu D$; and the last term in \eqref{eq:decompose}
is less than or equal to 0, i.e. $\tilde{f}(\bB^{\ast}) -
f(\bB^{\ast}) \leq 0$. Combining \eqref{eq:smoothbound1} with these
two simple bounds, we have:
\begin{equation}
\label{eq:bound1}
   f(\bB^t)-f(\bB^{\ast}) \leq \mu D +
   \frac{2L\|\bB^{\ast}\|_F^2}{t^2} \leq  \mu D +
   \frac{2\|\bB^{\ast}\|_F^2}{t^2} \left(\frac{\|\Gamma\|_U^2}{\mu}+\lambda_{\max}(\bX^T\bX) \right).
\end{equation}
By setting $\mu=\frac{\epsilon}{2D}$ and plugging it into the right
hand side of \eqref{eq:bound1}, we obtain
\begin{equation}
\label{eq:bound2}
 f(\bB^t)-f(\bB^{\ast}) \leq  \frac{\epsilon}{2}
 +\frac{2\|\bB^{\ast}\|_F^2}{t^2}\left(\frac{2D\|\Gamma\|_U^2}{\epsilon}+\lambda_{\max}\left(\bX^T\bX\right)\right).
\end{equation}
If we require the right-hand side of \eqref{eq:bound2} to be equal to
$\epsilon$ and solve for $t$, we obtain the bound of $t$ in
\eqref{eq:bound}.

Note that we can set $\mu=\frac{\epsilon}{h}$ for any $h>1$ to
achieve $O\left(\frac{1}{\epsilon}\right)$ convergence rate,
which is different from \eqref{eq:bound} only by a constant factor.

\subsection{Proof of Theorem \ref{thm:statconvergence}}

Define $V_N(\mathbf{U})$ by
    \begin{eqnarray}
    V_N(\mathbf{U}) =
    \sum_{k=1}^K \sum_{i=1}^N
        \big[(\varepsilon_{ik}-\mathbf{u}_k^T\mathbf{x}_i/\sqrt{N})^2-\varepsilon_{ik}^2\big]
        + \lambda_N^{(1)} \sum_{k} \sum_{j}
        \big[|\beta_{jk}+u_{jk}/\sqrt{N}|-|\beta_{jk}|\big] \nonumber \\
        + \lambda_N^{(2)} \sum_{(m,l)} f(r_{m,l}) \sum_j
        \big[|\beta_{j,(m,l)}'+
        u_{j,(m,l)}'/\sqrt{N}| -|\beta_{j,(m,l)}'|\big], \nonumber
    \end{eqnarray}
Note that $V_N(\mathbf{u})$ is minimized at
$\sqrt{N}(\hat{\mathbf{B}}_N-\mathbf{B})$. Notice that we have
    \begin{subequations}
    \begin{align}
    \sum_{k=1}^K \sum_{i=1}^N
    \Big[(\varepsilon_{ik}-\mathbf{u}_k^T\mathbf{x}_i/\sqrt{N})^2
        -\varepsilon_{ik}^2\Big] \to_{d}
        -2\sum_k [ \mathbf{u}_k^T \mathbf{W}+\mathbf{u}_k^TC\mathbf{u}_k],
        \quad\quad
        \quad\quad\quad\quad\quad\quad
        \quad\quad\quad\quad\quad\quad
        \quad\quad\quad\quad\quad\quad
        \nonumber \\
    \lambda_N^{(1)} \!\sum_{k} \sum_{j}
        \big[|\beta_{jk}\!+\!u_{jk}/\sqrt{N}|-|\beta_{jk}|\big]
        \to_{\substack{d}}
        \lambda_0^{(1)} \sum_{k} \sum_{j} \big[u_{jk}
        \textrm{sign}(\beta_{jk})I(\beta_{jk} \neq 0)
        \!+\!|u_{jk}|I(\beta_{jk}=0)\big],
        \quad\quad\quad\quad\quad\quad\quad
    \,
        \nonumber
        \\
    \gamma_N^{(2)} \sum_{e=(m,l)\in E} f(r_{ml}) \sum_j
        \big[|\beta_{je}'+
        u_{je}'/\sqrt{N}|
        -|\beta_{je}'|\big]
        \quad\quad\quad\quad\quad\quad\quad
        \quad\quad\quad\quad\quad\quad\quad
        \quad\quad\quad\quad\quad\quad
        \quad
        \quad\quad\quad\quad\quad\quad \nonumber \\
        \to_{d}
        \lambda_0^{(2)} \sum_{e=(m,l)\in E} f(r_{ml}) \sum_j
        \big[ u_{je}' \mysgn( \beta_{je}')
        I(\beta_{je}' \neq 0 )
        +|u_{je}'|I(\beta_{je}'=0)\big].
        \quad\quad\quad\quad\quad\quad
        \quad\quad\quad\quad\quad
        \nonumber
    \end{align}
    \end{subequations}
Thus, $V_N(\mathbf{U})\to_{\substack{d}} V(\mathbf{U})$ with the
finite-dimensional convergence holding trivially. Since $V_N$ is
convex and $V$ has a unique minimum, it follows that
$\arg\!\min_\mathbf{U}
V_N(\mathbf{U})=\sqrt{N}(\hat{\mathbf{B}}-\mathbf{B})\to_{\substack{d}}$
$\arg\!\min_{\mathbf{U}} V(\mathbf{U})$.

\newpage
\bibliography{paper}

\begin{thebibliography}{10}

\bibitem{CPLEX}
{\em CPLEX 9.0 User's Manual}.

\bibitem{MOSEK}
{\em The MOSEK Optimization Software (http://www.mosek.com/)}.

\bibitem{Argyriou:06}
Andreas Argyriou, Theodoros Evgeniou, and Massimiliano Pontil.
\newblock Convex multi-task feature learning.
\newblock {\em Machine Learning}, 73:243--272, 2006.

\bibitem{Ber:99}
Dimitri Bertsekas.
\newblock {\em Nonlinear Programming}.
\newblock Athena Scientific, 1999.

\bibitem{Friedman:07}
Jerome Friedman, Trevor Hastie, Holger H\"ofling, and Robert Tibshirani.
\newblock Pathwise coordinate optimization.
\newblock {\em Ann. Appl. Stat}, 1:302--332, 2007.

\bibitem{Gho:09}
Joumana Ghosn and Yoshua Bengio.
\newblock Multi-task learning for stock selection.
\newblock In {\em Advances in Neural Information Processing Systems (NIPS)},
  1997.

\bibitem{Conv:01}
Jean-Baptiste Hiriart-Urruty and Claude Lemarechal.
\newblock {\em Fundamentals of Convex Analysis}.
\newblock Springer, 2001.

\bibitem{Holger:09}
Holger Hoefling.
\newblock A path algorithm for the fused lasso signal approximator.
\newblock arXiv:0910.0526v1 [stat.CO].

\bibitem{Shuiwang:09}
Shuiwang Ji and Jieping Ye.
\newblock An accelerated gradient method for trace norm minimization.
\newblock In {\em International Conference on Machine Learning}, 09.

\bibitem{Kolar:09}
Mladen Kolar, Le~Song, and Eric~P. Xing.
\newblock Sparsistent learning of varying coeﬃcient models with structural
  changes.
\newblock In {\em Advances in Neural Information Processing Systems}, 2009.

\bibitem{Liu:09}
Han Liu, Mark Palatucci, and Jian Zhang.
\newblock Blockwise coordinate descent procedures for the multi-task lasso,
  with applications to neural semantic basis discovery.
\newblock In {\em International Conference on Machine Learning}, 2009.

\bibitem{SOCP:98}
Miguel~Sousa Lobo, Lieven Vandenberghe, Stephen Boyd, and Herve Lebret.
\newblock Applications of second-order cone programming.
\newblock {\em Linear Algebra and Its Applications}, 284:193--228, 1998.

\bibitem{sarp:2007}
W.~Moore, E.~Bleecker, D.~Curran-Everett, S.~Erzurum, B.~Ameredes,
  L.~Bacharier, W.~Calhoun, M.~Castro, K.~Chung, and M.~Clark.
\newblock Characterization of the severe asthma phenotype by the {N}ational
  {H}eart, {L}ung, and {B}lood {I}nstitute's {S}evere {A}sthma {R}esearch
  {P}rogram.
\newblock {\em Journal of Allergy and Clinical Immunology}, 119:405--13, 2007.

\bibitem{Nesterov:05}
Yurii Nesterov.
\newblock Smooth minimization of non-smooth functions.
\newblock {\em Mathematical Programming}, 103(1):127--152, 2005.

\bibitem{Obozinski:09}
Guillaume Obozinski, Ben Taskar, and Michael~I. Jordan.
\newblock High-dimensional union support recovery in multivariate regression.
\newblock In {\em NIPS}. MIT Press, 2009.

\bibitem{Jieping:09}
Liang Sun, Jun Liu, Jianhui Chen, and Jieping Ye.
\newblock Efficient recovery of jointly sparse vectors.
\newblock In {\em NIPS}, 09.

\bibitem{Tibshirani:96}
Robert Tibshirani.
\newblock Regression shrinkage and selection via the lasso.
\newblock {\em J.R.Statist.Soc.B}, 58:267--288, 1996.

\bibitem{Tibshirani:05}
Robert Tibshirani and Michael Saunders.
\newblock Sparsity and smoothness via the fused lasso.
\newblock {\em J.R.Statist.Soc.B}, 67(1):91--108, 2005.

\bibitem{Turlach:05}
Berwin Turlach, William Venables, and Stephen Wright.
\newblock Simultaneous variable selection.
\newblock {\em Technometrics}, 47:349--363, 2005.

\bibitem{SDPT3}
Reha~H. T\"ut\"unc\"u, Kim~C. Toh, and Michael~J. Todd.
\newblock Solving semidefinite-quadratic-linear programs using sdpt3.
\newblock {\em Mathematical Programming Ser. B}, 95:189--217, 2003.

\bibitem{Yu:2005}
Kai Yu, Volker Tresp, and Anton Schwaighofer.
\newblock Learning gaussian processes from multiple tasks.
\newblock In {\em International Conference on Machine Learning}, 2005.

\bibitem{Zhang:2008}
Jian Zhang, Zoubin Ghahramani, and Yiming Yang.
\newblock Flexible latent variable models for multi-task learning.
\newblock {\em Machine Learning}, 73(3):221--242, 2008.

\bibitem{brem:2008}
J.~Zhu, B.~Zhang, E.N. Smith, B.~Drees, R.B. Brem, L.~Kruglyak, R.E. Bumgarner,
  and E.E. Schadt.
\newblock Integrating large-scale functional genomic data to dissect the
  complexity of yeast regulatory networks.
\newblock {\em Nature Genetics}, 40:854--61, 2008.

\end{thebibliography}
\bibliographystyle{plain}
\end{document}